\title{DP-SGD with weight clipping}
\author{
  Antoine Barczewski
  \and
  Jan Ramon
}
\institute{
MAGNET INRIA Lille, France}
\begin{document}

\maketitle

\begin{abstract}
  Recently, due to the popularity of deep neural networks and other methods whose training typically relies on the optimization of an objective function, and due to concerns for data privacy, there is a lot of interest in differentially private gradient descent methods.  To achieve differential privacy guarantees with a minimum amount of noise, it is important to be able to bound precisely the sensitivity of the information which the participants will observe.  In this study, we present a novel approach that mitigates the bias arising from traditional gradient clipping. By leveraging a public upper bound of the Lipschitz value of the current model, we can achieve refined noise level adjustments.  We present a new algorithm with improved differential privacy guarantees and a systematic empirical evaluation, showing that our new approach outperforms existing approaches also in practice.
\end{abstract}

\newcommand{\globalWeightDpSgd}{Fix-Lip-DP-SGD}
\newcommand{\weightDpSgd}{Lip-DP-SGD}
\newcommand{\gradDpSgd}{DP-SGD}

\section{Introduction}
\label{sec:intro}

While machine learning allows for extracting statistical information from data with both high economical and societal value, there is a growing awareness of the risks for data privacy and confidentiality.  Differential privacy \cite{dwork_algorithmic_2013} has emerged as an important metric for studying statistical privacy.

Due to the popularity of deep neural networks (DNNs) and similar models, one of the recently most trending algorithmic techniques in machine learning has been stochastic gradient descent (SGD), which is a technique allowing for iteratively improving a candidate model using the gradient of the objective function on the data.

A popular class of algorithms to realize differential privacy while performing SGD is the \gradDpSgd{} algorithm \cite{abadi_deep_2016} and its variants.  Essentially, these algorithms iteratively compute gradients, add differential privacy noise, and use the noisy gradient to update the model.  To determine the level of differential privacy achieved, one uses an appropriate composition rule to bound the total information leaked in the several iterations.

% An important application of DP-SGD is in Federated Learning (FL) settings where the data is distributed over multiple data owners who want to collaboratively learn a model on the union of their data sets without disclosing their private data.  In FL, an Aggregator coordinates the collaborative process.  In every iteration (a batch of) the data owners compute the gradient of the current model on their own data, and then securely compute an average gradient to be used by the aggregator to update the model.  The data owners then want to avoid that the Aggregator, having access to all average gradients, can infer any sensitive information.\antoinefoot{do we still need this paragraph?}

%Making an algorithm differentially private requires that noise is added to each piece of information which can be seen by an adversary.  We consider a classic strategy of federated learning of DNNs where in each iteration an aggregator broadcasts the current model parameters, after which all data owners compute locally the gradients of the objective function for this global model on their private data, securely aggregate these gradients and send the aggregated gradient to the aggregator.

To achieve differential privacy with a minimum amount of noise, it is important to be able to bound precisely the sensitivity of the information which the participants will observe.  One approach is to bound the sensitivity of the gradient by assuming the objective function is Lipschitz continuous \cite{bassily_differentially_2014}.
Various improvements exist in the case one can make additional assumptions about the objective function.  For example, if the objective function is strongly convex, one can bound the number of iterations needed and in that way avoid to have to distribute the available privacy budget over too many iterations \cite{bassily_private_2019}.
  In the case of DNN, the objective function is not convex and typically not even Lipschitz continuous.  Therefore, a common method is to 'clip' contributed gradients \cite{abadi_deep_2016}, i.e., to divide gradients by the maximum possible norm they may get.  These normalized gradients have bounded norm and hence bounded sensitivity.

In this paper, we argue that gradient clipping may not lead to optimal statistical results (see Section \ref{sec:experiment}), and we propose instead to use weight clipping.  Moreover, we also propose to consider the maximum gradient norm given the current position in the search space rather than the global maximum gradient norm, as this leads to additional advantages.
In particular, our contributions are as follows:
\begin{itemize}
\item We introduce an novel approach, applicable to any feed-forward neural network, to compute gradient sensitivity that when applied in \gradDpSgd{} eliminates the need for gradient clipping. This strategy bridges the gap between Lipschitz-constrained neural networks and DP.
\item We present a new algorithm, \weightDpSgd{}, that enforces bounded sensitivity of the gradients
  %, and (2) adaptively upper bounds the gradient norm depending on the (publicly known) current model.
  We argue that our approach, based on weight clipping, doesn't suffer from the bias which the classic gradient clipping can cause.
%\item We demonstrate that \weightDpSgd{} is versatile and can be implemented on any feed-forward model structure, ranging from multi-layer perceptrons to deep convolutional neural networks.\antoinefoot{add contribution: do you agree?}
  % , which are in worst case equivalent to existing bounds but could in practice allow for using a substantially smaller amount of noise.
   % We argue that the weight clipping strategy doesn't suffer from the bias which the classic gradient clipping can cause.
\item We present an empirical evaluation, confirming that on a range of popular datasets our proposed method outperforms existing ones.
  %the amount of noise needed to achieve the same level of differential privacy is lower, and this affects favorably the performance of the trained models.
  
% \item We implemented our new algorithm in an open source library.

%  We provide a user-friendly Python library for deploying \weightDpSgd.\antoinefoot{add contribution: do you agree?}
%\item We also empirically show that adaptively bounding the gradient sensitivity based on the current position in the search space in practice avoids the need to search for an appropriate value for the hyperparameter specifying the maximal weight or gradient norm. \antoinefoot{we only show it for mnist}
\end{itemize}

The remainder of this paper is organized as follows.  First, we review a number of basic concepts, definitions and notations in Section \ref{sec:prelim}.  Next, we present our new method in Section \ref{sec:approach} and present an empirical evaluation in Section \ref{sec:experiment}.  We discuss related work in Section \ref{sec:related}.
% and limitations in Section \ref{sec:limitations}.
Finally, we provide conclusions and directions for future work in Section \ref{sec:concl}.

\newcommand{\dspace}{\mathcal{Z}}
\newcommand{\dsetspace}{\mathcal{Z}^*}
\newcommand{\dset}{Z}
\newcommand{\dsetDistr}{P_Z}
\newcommand{\dseta}{{Z_1}}
\newcommand{\dsetb}{{Z_2}}
\newcommand{\adjdsets}{{\mathcal{Z}^*_\sim}}
\newcommand{\exz}{{\rm{z}}}
\newcommand{\exx}{{\rm{x}}}
\newcommand{\exy}{{\rm{y}}}
\newcommand{\xspace}{{\mathcal{X}}}
\newcommand{\yspace}{{\mathcal{Y}}}
\newcommand{\hyspace}{{\hat{\mathcal{Y}}}}
\newcommand{\Fll}{{F^{ll}}}
\newcommand{\sLipschitz}{{L^g}}
\newcommand{\vLipschitz}{L}
\newcommand{\thetat}{{{\tilde{\theta}}^{(t)}}}
\newcommand{\thetaT}{{{\tilde{\theta}}^{(T)}}}
\newcommand{\thetaone}{{{\tilde{\theta}^{(1)}}}}
\newcommand{\thetatsucc}{{{\tilde{\theta}}^{(t+1)}}}
\newcommand{\noiset}{{b^{(t)}}}
\newcommand{\gradt}{{g^{(t)}}}
\newcommand{\gradtv}{{g^{(t)}_v}}
\newcommand{\batchset}{{V}}
\newcommand{\batchi}{{v_i}}
\newcommand{\batchsize}{s}
\newcommand{\fnnFunc}{f_\theta}
\newcommand{\layerFunc}[1]{f^{({#1})}_{\theta_{{#1}}}}
\newcommand{\lossPartialInput}[1]{\frac{\partial\mathcal{L}_{{#1}}}{\partial x_{{#1}}}}
\newcommand{\lossPartialWeight}[1]{\frac{\partial\mathcal{L}_{#1}}{\partial \theta_{{#1}}}}
\newcommand{\layerPartialInput}[1]{\frac{\partial \layerFunc{{#1}}}{\partial x_{{#1}}}}
\newcommand{\layerPartialWeight}[1]{\frac{\partial \layerFunc{{#1}}}{\partial \theta_{{#1}}}}
\newcommand{\thetaSpace}{\Theta}
\newcommand{\thetaSpaceLeC}{\Theta_{\le C}}
\newcommand{\thetaSpaceEqC}{\Theta_{=C}}
\newcommand{\sensitivity}{s_2}

\section{Preliminaries and background}\label{sec:prelim}
In this section, we briefly review differential privacy, empirical risk minimization (ERM) and differentially private stochastic gradient descent (\gradDpSgd).

We will denote the space of all possible instances by $\dspace$ and the space of all possible datasets by $\dsetspace$.  We will denote by $[N]=\{1\ldots N\}$ the set of the $N$ smallest positive integers.

\subsection{Differential Privacy}

An algorithm is differentially private if even an adversary who knows all but one instances of a dataset can't distinguish from the output of the algorithm the last instance in the dataset.  More formally:

\begin{definition}[adjacent datasets]
  We say two datasets $\dseta,\dsetb \in \dsetspace$ are adjacent, denoted $\dseta\sim \dsetb$, if they differ in at most one element.  We denote by $\adjdsets$ the space of all pairs of adjacent datasets.
\end{definition}

\begin{definition}[differential privacy \cite{dwork_algorithmic_2013}]
  Let $\epsilon>0$ and $\delta>0$.  Let $\mathcal{A}:\dsetspace\to\mathcal{O}$ be a randomized algorithm taking as input datasets from $\dsetspace$.   The algorithm $\mathcal{A}$ is $(\epsilon,\delta)$-differentially private ($(\epsilon,\delta)$-DP) if for every pair of adjacent datasets $(\dseta,\dsetb)\in\adjdsets$, and for every subset $S\subseteq \mathcal{O}$ of possible outputs of $\mathcal{A}$, $P(\mathcal{A}(\dseta)\subseteq S) \le e^\epsilon P(\mathcal{A}(\dsetb)\subseteq S)+\delta$.  If $\delta=0$ we also say that $\mathcal{A}$ is $\epsilon$-DP.
\end{definition}
If the output of an algorithm $\mathcal{A}$ is a real number or a vector, it can be privately released thanks to differential privacy mechanisms such as the Laplace mechanism or the Gaussian mechanism \cite{dwork_calibrating_nodate}.  While our ideas are more generally applicable, in this paper we will focus on the Gaussian mechanism as it leads to simplier derivations.  In particular, the Gaussian mechanism adds Gaussian noise to a number or vector which depends on its sensitivity on the input.
\begin{definition}[sensitivity]
  \label{def:sensitivity}
  The $\ell_2$-sensitivity of a function $f:\dspace\to\mathbb{R}^m$ is
  \[\sensitivity(f) =\max_{\dseta,\dsetb\in\adjdsets} \|f(\dseta) - f(\dsetb)\|_2 \]
\end{definition}

\begin{lemma}[Gaussian mechanism]
  \label{lem:gaussmech}
  Let $f:\dspace\to\mathbb{R}^m$ be a function.  The Gaussian mechanism transforms $f$ into ${\hat{f}}$ with ${\hat{f}}(\dset) = f(\dset) + b$ where $b\sim \mathcal{N}(0,\sigma^2 I_m)\in\mathbb{R}^m$ is Gaussian distributed noise.  If the variance satisfies $\sigma^2 \ge 2\ln(1.25/\delta)(\sensitivity(f))^2/\epsilon^2$, then ${\hat{f}}$ is $(\epsilon,\delta)$-DP.
\end{lemma}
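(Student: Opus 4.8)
The plan is to follow the classical privacy-loss analysis of the Gaussian mechanism. By the definition of $(\epsilon,\delta)$-DP it suffices to show that for every pair of adjacent inputs $\dseta\sim\dsetb$ and every measurable $S\subseteq\mathbb{R}^p$ we have $P(\hat{f}(\dseta)\in S)\le e^\epsilon P(\hat{f}(\dsetb)\in S)+\delta$. First I would reduce this inequality to a tail bound on a single scalar random variable, the \emph{privacy loss}
\[ L(o)=\ln\frac{p_{\dseta}(o)}{p_{\dsetb}(o)}, \]
where $p_{\dset}$ denotes the density of $\hat{f}(\dset)$ at output $o$. The standard observation is that if $P_{o\sim\hat{f}(\dseta)}(L(o)>\epsilon)\le\delta$, then the DP inequality follows: one splits $S$ into the region where the density ratio is at most $e^\epsilon$, which contributes at most $e^\epsilon P(\hat{f}(\dsetb)\in S)$, and the complementary region, whose $\hat{f}(\dseta)$-mass is exactly $P(L>\epsilon)\le\delta$.

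Next I would compute $L$ explicitly. Writing $v=f(\dseta)-f(\dsetb)$ and $\Delta=\|v\|_2$, Definition \ref{def:sensitivity} gives $\Delta\le\sensitivity(f)$. Since both densities are isotropic Gaussians differing only by the translation $v$, for an output $o=f(\dseta)+b$ one obtains
\[ L(o)=\frac{1}{2\sigma^2}\left(\|o-f(\dsetb)\|_2^2-\|o-f(\dseta)\|_2^2\right)=\frac{\langle b,v\rangle}{\sigma^2}+\frac{\Delta^2}{2\sigma^2}. \]
Because $b\sim\mathcal{N}(0,\sigma^2 I_p)$, the inner product $\langle b,v\rangle$ is a centered Gaussian with variance $\sigma^2\Delta^2$, so $L$ is itself Gaussian with mean $\Delta^2/(2\sigma^2)$ and variance $\Delta^2/\sigma^2$. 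This closed form is what turns the measure-theoretic statement into a concrete one-dimensional tail estimate.

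It then remains to verify $P(L>\epsilon)\le\delta$. After standardizing, this reads $P(N>\frac{\sigma\epsilon}{\Delta}-\frac{\Delta}{2\sigma})\le\delta$ for $N\sim\mathcal{N}(0,1)$. The threshold is strictly decreasing in $\Delta$, so the probability is maximized at the worst case $\Delta=\sensitivity(f)$, and it suffices to treat that case. Substituting the hypothesis $\sigma^2\ge 2\ln(1.25/\delta)\,\sensitivity(f)^2/\epsilon^2$ and applying the Gaussian tail estimate $P(N>t)\le\frac{1}{t\sqrt{2\pi}}e^{-t^2/2}$ reduces the claim to an elementary inequality in $\epsilon$ and $\delta$. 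I expect this last step to be the main obstacle, though a technical rather than a conceptual one: the specific constant $1.25$ is precisely what makes the algebra close, and the bound as usually derived requires $\epsilon\le 1$ to control the cross terms in the exponent. Since the lemma is quoted here for general $\epsilon>0$, I would flag this range restriction explicitly and otherwise just track the constants carefully to the end.
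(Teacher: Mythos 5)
The paper does not prove this lemma itself; it quotes it as a known result with a citation, and your argument is precisely the standard proof from that literature (privacy-loss random variable, explicit Gaussian form of the loss, reduction to a one-dimensional tail bound at the worst-case sensitivity, then the tail estimate $P(N>t)\le e^{-t^2/2}/(t\sqrt{2\pi})$ with the constant $1.25$). All the steps you outline are correct: the split of $S$ into the sub-$e^\epsilon$ and tail regions, the computation $L=\langle b,v\rangle/\sigma^2+\|v\|_2^2/(2\sigma^2)$, the monotonicity in $\|v\|_2$ justifying the reduction to the worst case, and the final calibration. Your flag about the range of $\epsilon$ is also well taken and is the one point where the paper's statement is looser than the classical theorem: the standard derivation with the constant $1.25$ is only valid for $\epsilon\le 1$ (for larger $\epsilon$ one needs a modified calibration), so the lemma as quoted is, strictly speaking, overstated for $\epsilon>1$; your proof is complete on the range where the classical result holds.
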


%the output of $\mathcal{A}$ before releasing it. The scale of the variance of the noise is calibrated to the sensitivity of $\mathcal{A}$ [WIP]

\subsection{Empirical risk minimization}
\label{sec:erm}
Unless made explicit otherwise we will consider databases $\dset=\{\exz_i\}_{i=1}^n$ containing $n$ instances $\exz_i=(\exx_i,\exy_i)\in\xspace\times \yspace$ with $\xspace=\mathbb{R}^p$ and $\yspace=\{0,1\}$ sampled identically and independently (i.i.d.) from an unknown distribution on $\dspace$. We are trying to build a model $f_\theta: \xspace \rightarrow \hyspace$ (with $\hyspace\subseteq\mathbb{R}$) parameterized by $\theta \in \Theta \subseteq \mathbb{R}^m$, so it minimizes the expected loss $\mathcal{L}(\theta)=\mathbb{E}_{z}[\mathcal{L}(\theta ; \exz)]$, where $\mathcal{L}(\theta; \exz)=\ell(f_\theta(\exx),\exy)$ is the loss of the model $f_\theta$ on data point $\exz$.
One can approximate $\mathcal{L}(\theta)$ by
\[
\hat{R}(\theta ; \dset)=\frac{1}{n} \sum_{i=1}^n \mathcal{L}\left(\theta ; \exz_i\right)=\frac{1}{n}\sum_{i=1}^n \ell(f_\theta(\exx_i), \exy_i),
\] the empirical risk of model $f_\theta$.  Empirical Risk Minimization (ERM) then minimizes an objective function $F(\theta,\dset)$ which adds to this empirical risk a regularization term $\psi(\theta)$ to find an estimate ${\hat{\theta}}$ of the model parameters:
%and $\psi(\theta)$, penalty on model parameters (e.g., $\L_1$ norm).

%\begin{definition}[Empirical Risk Minimization (ERM)]
\begin{displaymath}
\hat{\theta} \in \underset{\theta \in \Theta}{\arg \min }\  F(\theta ; \dset):=\hat{R}(\theta ; \dset)+\gamma \psi(\theta)
\end{displaymath}
where $\gamma \geq 0$ is a trade-off hyperparameter.
%\end{definition}

\paragraph*{Feed forward neural networks}
An important and easy to analyze class of neural networks are the feed forward networks (FNN).  A FNN is a direct acyclic graph where connections between nodes don't form cycles.

\begin{definition}\label{def:fnn}
  A FNN $f_\theta: \mathbb{R}^n \to \mathbb{R}^m$ is a function which can be expressed as
  \[
    \fnnFunc = \layerFunc{K} \circ \ldots \circ \layerFunc{1}
  \]
  where $\layerFunc{k} : \mathbb{R}^{n_{k}} \to \mathbb{R}^{n_{k+1}}$. $\layerFunc{k}$ is the $k$-th layer function parameterized by $\theta_k$ for $1\le k\le K$. We denote the input of $\layerFunc{k}$ by $x_k$ and its output by $x_{k+1}$.
 %$\layerFunc{k}(\theta_k, x_k)$ by $x_{k+1}$.
  Here, $\theta=(\theta_1 \ldots \theta_K)$, $n=n_1$ and $m=n_{K+1}$. 
\end{definition}
Common layers include fully connected layers, convolutional layers and activation layers. Parameters of the first two correspond to weight and bias matrices, $\theta_k = (W_k, B_k)$, while activation layers have no parameter, $\theta_k = ()$.
%\paragraph*{Multi-Layer perceptrons}
%A popular and easy to analyze deep learning model is the multi-layer perceptron (MLP).  We will use it in this paper to illustrate our idea.
%\begin{definition}\label{def:mlp}
%  A $K$-layer Perceptron $f_{M L P}: \mathbb{R}^n \rightarrow \mathbb{R}^m$ is the function
%  \begin{align*}
%    f_{M L P}(x)= \rho_K \circ P_K \circ \rho_{K-1} \circ \cdots \circ \rho_1 \circ P_1(x),
%  \end{align*}
%  where $P_k: x \mapsto M_k x+b_k$ is an affine function and $\rho_k: x \mapsto\left(h_k\left(x_i\right)\right)_{i \in [ 1, n_k ]}$ is a non-linear (component-wise) activation function.  $f_{MLP}$ is parameterized by a tuple of matrices $\theta = (M_k)_{k=1}^K$.
%\end{definition}
%  We can then consider the empirical loss function
%  \[
%    {\hat R}(\theta; Z) = \jancolor{\frac{1}{n}} \sum_{i=1}^n \mathcal{L}(\theta; z_i) = \jancolor{\frac{1}{n}} \sum_{i=1}^n \ell(f_\theta(x_i), y_i)
%  \]
  %\begin{align*}
%  {\hat R}(\theta; Z) = \sum_{i=1}^n \mathcal{L}(\theta; z_i) = \sum_{i=1}^n \ell(f_{M L P}(\theta, x_i), y_i)
%\end{align*}
%  with $\ell$ an error cost function e.g., mean squared error (MSE) or cross-entropy (CE).

\subsection{Stochastic gradient descent}

To minimize $F(\theta,\dset)$,
 one can use gradient descent, i.e., iteratively for a number of time steps $t=1\ldots T$ one computes a gradient $\gradt = \nabla F(\thetat,\dset)$ on the current model $\thetat$ and updates the model setting $\thetatsucc=\thetat - \eta(t) \gradt$ where $\eta(t)$ is a learning rate.  Stochastic gradient descent (SGD) introduces some randomness and avoids the need to recompute all gradients in each iteration by sampling in each iteration a batch $\batchset \subseteq Z$ and computing an approximate gradient ${\hat{g}}_t = \frac{1}{|\batchset|} \left(\sum_{i=1}^{|\batchset|} \nabla \mathcal{L}(\thetat,\batchi) + \noiset\right)+\gamma\nabla\psi(\theta)$.

%To be more concrete, in each iteration of the SGD algorithm, the data owners and the Aggregator collaboratively compute a noisy gradient ${\hat{G}}_t = \frac{1}{\batchsize} \left(\sum_{i=1}^{\batchsize} \nabla \mathcal{L}(\thetat,\batchi) + \noiset\right)+\gamma\nabla\psi(\theta)$ over a sample $\batchset=\{\batchi\}_{i=1}^{\batchsize}$  of the data set where $\noiset$ is appropriate noise.

 To avoid leaking sensitive information, \cite{abadi_deep_2016} proposes to add noise to the gradients.   Determining good values for the scale of this noise
 %in the $b_t$ variables
 has been the topic of several studies.  One simple strategy starts by assuming an upper bound for the norm of the gradient.  Let us first define Lipschitz functions:

\begin{definition}[Lipschitz function]
  Let $\sLipschitz>0 .$ A function $f$ is $\sLipschitz$-Lipschitz with respect to some norm $\|\cdot\|$ if for all $\theta, \theta^{\prime} \in \Theta$ there holds
  $\left\|f(\theta)-f\left(\theta^{\prime}\right)\right\| \leq \sLipschitz\left\|\theta-\theta^{\prime}\right\|.$
  If $f$ is differentiable, by Rademacher's theorem the above property is equivalent to:
  $$
  \|\nabla f(\theta)\| \leq \sLipschitz, \quad \forall \theta \in \Theta
  $$
  We call the smallest value $\sLipschitz$ for which $f$ is $\sLipschitz$-Lipschitz the Lipschitz value of $f$.
\end{definition}

Then, from the model one can derive a constant $\sLipschitz$ such that the objective function is $\sLipschitz$-Lipschitz, while knowing bounds on the data next allows for computing a bound on the sensitivity of the gradient.  Once one knows the sensitivity, one can determine the noise to be added from the privacy parameters as in Lemma \ref{lem:gaussmech}.
The classic \gradDpSgd{} algorithm \cite{abadi_deep_2016}, which we recall in Algorithm \ref{alg:grad-dpsgd} in Appendix \ref{sec:grad-dpsgd} for completeness, clips the gradient of each instance to a maximum value $C$ (i.e., scales down the gradient if its norm is above $C$) and then adds noise based on this maximal norm $C$.
\newcommand{\clip}[2]{\hbox{clip}_{{#1}}\left({#2}\right)}
\[
  \tilde{g}_t = \frac{1}{|\batchset|} \left(\sum_{i=1}^{|\batchset|}  \clip{C}{\nabla_{\tilde{\theta}}\mathcal{L}\left(\tilde{\theta}^{(t)},v_i\right)} + b_t\right) + \gamma \nabla\psi(\theta)
\]
where $b_t$ is appropriate noise and where
\[
  \clip{C}{v} = v.\min\left(1,\frac{C}{\|v\|}\right).
  \]

\subsection{Regularization}

Several papers \cite{ioffe_batch_2015,Wu2018GroupN} have pointed 
out that regularization can help to improve the performance of 
stochastic gradient descent.  Although batch normalization 
\cite{ioffe_batch_2015} does not provide protection against 
privacy leakage, group normalization \cite{Wu2018GroupN} has 
the potential to do so \cite{de2022unlocking}.  
\cite{de2022unlocking} combines group normalization with 
\gradDpSgd{}, the algorithm to which we propose an improvement 
in the current paper.  Group normalization is a technique adding 
specific layers, called group normalization layers, to the network. 
Making abstraction of some elements specific to image datasets, we 
can formalize it as follows.

% For every layer $k$ of a FNN, let $G_k$ be a partitioning of the components of the input vector $x_k$ to that layer $k$.  If layer $k$ is not a group normalization layer, then $G_k= \bigl\{\{1 \ldots |x_k|\}\bigr\} $ just contains one partition class with all components of $x_k$.  If
\newcommand{\set}[2]{x_{#1}^{({#2})}}
\newcommand{\setCard}[2]{\left|{x_{#1}^{({#2})}}\right|}
\newcommand{\grpidx}{q}

For a vector $v$, we will denote the dimension of $v$ by $|v|$, 
i.e., $v\in\mathbb{R}^v$.

If the $k$-th layer is a normalization layer, then there holds 
$|x_k|=|x_{k+1}|$.
Moreover, the structure of the normalization layer defines a 
partitioning $\Gamma_k=\{\Gamma_{k,1} \ldots \Gamma_{k,|G|} \}$ 
of $[|x_k|]$, i.e., a partitioning of the components of $x_k$.  
The components of $x_k$ and $x_{k+1}$ are then grouped, and we 
define $x_k^{(k:\grpidx)} = \left(x_{k,j}\right)_{j\in \Gamma_{k,\grpidx}}$, 
i.e., $x_k^{(k:\grpidx)}$ is a subvector containing a group 
of components.  Similarly, $x_{k+1}^{(k:\grpidx)} =  \left(x_{k+1,j}\right)_{j\in \Gamma_{k,\grpidx}}$.
Then, %$\layerFunc{k}$
the $k$-th layer performs the following operation:
\begin{equation}\label{def:normalization}
  x_{k+1}^{(k:\grpidx)}=\layerFunc{k}(x_k^{(k:\grpidx)})=\frac{1}{\sigma^{(k:\grpidx)}}\left(x_k^{(k:\grpidx)}-\mu^{(k:\grpidx)}\right),
\end{equation}
(but note we will adapt this in Eq 
\eqref{eq:safe.group.normalization}) where
%$\grpidx$ is the index of the partition $\set{k}{\grpidx}$ of pixels or features of $x_k$ in which the mean $\mu^{(\grpidx)}$ and standard deviation (std) $\sigma^{(\grpidx)}$ are computed:
\begin{eqnarray*}
  \mu^{(k:\grpidx)} &=& \frac{1}{|\Gamma_{k,\grpidx}|} \sum_{j=1}^{|\Gamma_{k,\grpidx}|} x_{k,j},\\
  \sigma^{(k:\grpidx)} &=& \left(\frac{1}{|\Gamma_{k,\grpidx}|} \sum_{j=1}^{|\Gamma_{k,\grpidx}|} \left(x_{k,j} - \mu^{(k:\grpidx)}\right)^2+\kappa\right)^{1/2},
\end{eqnarray*}
with $\kappa$ a small constant. %and $\setCard{k}{i}$ is the size of the $i$-th group.

Various feature normalization methods primarily vary in their 
definitions of the partition of features $\Gamma_{k,\grpidx}$.

\section{Our approach}
\label{sec:approach}

\newcommand{\thetanorm}{u^{(\theta)}}

In this work, we constrain the objective function to be Lipschitz, and exploit this to determine sensitivity. An important advantage is that while traditional \gradDpSgd{} controls sensitivity via gradient clipping of each sample separately, our new method estimates gradient sensitivity based on only the model in a data-independent way. This is grounded in Lipschitz-constrained model literature (\cref{sec:related}), highlighting the connection between the Lipschitz value for input and parameter. Subsection \ref{subsec:lip_values} delves into determining an upper Lipschitz bound. Subsection \ref{subsec:backprop} demonstrates the use of backpropagation for gradient sensitivity estimation, and in \ref{subsec:lipdpsg}, we introduce \weightDpSgd{}, a novel algorithm ensuring privacy without gradient clipping.

\subsection{Estimating lipschitz values} \label{subsec:lip_values}

In this section we bound Lipschitz values of different types of layers.
We treat linear operations (e.g., linear transformations, convolutions) and activation functions as different layers.
We present results with regard to any $p$-norms with $p \in \{1, 2, +\infty\}$.

\paragraph{Loss function and activation layer.} Examples of Lipschitz losses encompass Softmax Cross-entropy, Cosine Similarity, and Multiclass Hinge. When it comes to activation layers, layers composed of an activation function, several prevalent ones, such as ReLU, tanh, and Sigmoid are 1-Lipschitz with respect to all $p$-norms. We provide a detailed list in the Appendix  \cref{tab:lip}.

\paragraph{Normalization layer. } % adpat notation
To be able to easily bound sensitivity, we define the operation of a normalization layer $f_{\theta_k}^{(k)}$ slightly differently than Eq \eqref{def:normalization}:
\begin{equation}
  \label{eq:safe.group.normalization}
    x_{k+1}^{(k:\grpidx)}=\layerFunc{k}(x_k^{(k:\grpidx)})=\frac{x_k^{(k:\grpidx)}-\mu^{(k:\grpidx)}}{\max(1, \sigma^{(k:\grpidx)})}.
  \end{equation}
It is easy to see that the sensitivity is bounded by
%If $\layerFunc{k}$ is a normalization layer, then,
\begin{equation}\label{eq:normal_lip}
  \begin{aligned}
    \left\|\layerPartialInput{k}\right\|_p \leq \max_{\grpidx \in [|\Gamma_k|]} \frac{1}{\max\left(1,\sigma^{(k:\grpidx)}\right)} \le 1.
  \end{aligned}
\end{equation}

%as one can derive from \cite{gouk2020regularisation}.
%Thus, the Lipschitz value of the normalization layer is dependent on the norm of the input. To avoid utilizing information from the data that will later be employed to compute overall sensitivity and scale the noise, we propose an adapted form of the group normalization operation:
%\begin{equation}
%  \label{eq:safe.group.normalization}
%    \layerFunc{k}(x_k^{(i)})=\frac{x_k^{(i)}-\mu^{(i)}}{\max(1, \sigma^{(i)})}.
%\end{equation}
%This version
%This ensures a Lipschitz value of $\sqrt{|x_k^{(i)}|}$ for the input of the normalization layer, with $|x_k^{(i)}|$ the number of pixels or features within the set $\set{k}{i}$.
Note that a group normalization layer has no trainable parameters.
%It's important to note that we employ a group normalization layer without any trainable parameters, eliminating any Lipschitz value associated with the parameters.

\paragraph{Linear layers.} \cite{gouk2020regularisation} presents a formulation for 
the Lipschitz value of linear layers in relation to p-norms. This formulation 
can be extended to determine the Lipschitz value with respect to the parameters.
Therefore, if $\layerFunc{k}$ is a linear layer, then

\begin{equation}\label{eq:linear_lip}
  \begin{aligned}
    \left\|\layerPartialWeight{k}\right\|_p = \left\| \frac{\partial (W_k^{\top} x_k + B_k)}{\partial (W_k,B_k)} \right \|_p = \|(\vec{x_k},1)\|_p, \\
    \left\|\layerPartialInput{k}\right\|_p = \left\| \frac{\partial (W_k^{\top} x_k+B_k)}{\partial x_k} \right \|_p = \|W_k\|_p,
  \end{aligned}
\end{equation}
with $\vec{x_k}$ the serialized vector of $x_k$.
\paragraph{Convolutional layers.} There are many types of convolutional layers, e.g., depending on the data type (strings, 2D images, 3D images \ldots), shape of the filter (rectangles, diamonds \ldots).  Here we provide as an example only a derivation for convolutional layers for 2D images with rectangular filter.
In that case, the input layer consists of $n_k = c_{in} h w $ nodes and the output layer consists of $n_{k+1}= c_{out} hw$ nodes with $c_{in}$ input channels, $c_{out}$ output channels, $h$ the height of the image and $w$ the width.
Then, $\theta_k\in\mathbb{R}^{c_{in}\times c_{out}\times h' \times w'}$ with $h'$ the height of the filter and $w'$ the width of the filter.  Indexing input and output with channel and coordinates, i.e., $x_k\in \mathbb{R}^{c_{in}\times h\times w}$ and $x_{k+1}\in \mathbb{R}^{c_{out}\times h \times w}$ we can then write
\[
x_{k+1,c,i,j} = \sum_{d=1}^{c_{in}} \sum_{r=1}^{h'} \sum_{s=1}^{w'} x_{k,d,i+r,j+s} \theta_{k,c,d,r,s}
\]
where components out of range are zero.
We can derive (see Appendix \ref{sec:bound.lipschitz.convol} for details) that

%\[
%\|x_{k+1}\|_2^2 = \sum_{c,i,j} \left(\sum_{d=1}^{c_{out}} \sum_{r=1}^{h'} \sum_{s=1}^{w'} x_{k,d,i+r,j+s} \theta_{k,c,d,r,s} \right)^2
%\]

\begin{equation}\label{eq:conv_lip_input}
  \begin{aligned}
    \left\|\layerPartialInput{k}\right\|_p
    %= \left\| \frac{\partial \theta_k \ast x_k}{\partial x_k} \right \|_2
    \leq  (h'w')^{\frac{1}{p}}\|\theta_k\|_p
  \end{aligned}
\end{equation}

  \begin{equation}\label{eq:conv_lip_weight}
    \left\|\layerPartialWeight{k}\right\|_p
    \leq (h'w')^{\frac{1}{p}}\|\vec{x_k}\|_p
  \end{equation}

\paragraph{Residual connections} Resnet architectures \cite{resnet} are usually based on residual blocks:
\begin{equation*}
  f^{(j+k)}_{\theta_{j+k}}(x_j)=x_j + (f^{(j+k-1)}_{\theta_{j+k-1}}\circ \ldots \circ f_{\theta_j}^{(j)})(x_j)
\end{equation*}
\cite{gouk2020regularisation} shows that the Lipschitz value of residual blocks is bounded by
\begin{equation}
  \left\|\frac{\partial f^{(k+j)}_{\theta_{k+j}}}{\partial x_j}\right\|_p \le 1 + \prod_{i=j}^{j+k-1} \left\| \frac{\partial f^{(i)}_{\theta_i}}{\partial x_i}\right\|_p .
\end{equation}
We summarize the upper bounds of the Lipschitz values, either on the input or on the parameters, for each layer type in the Appendix \cref{tab:lip}.
We can conclude that networks for which the norms of the parameter vectors $\theta_k$ are bounded, are Lipschitz networks as introduced in \cite{miyato2018spectral}, i.e., they are FNN for which each layer function $f_{\theta_k}^{(k)}$ is Lipschitz.  We will denote by $\thetaSpaceLeC$ and by $\thetaSpaceEqC$ the sets of all paremeter vectors $\theta$ for $f_\theta$ such that $\|\theta_k\|\le C$ and $\|\theta_k\|=C$ respectively, for $k=1\ldots K$.

% \paragraph{Computing sensitivty.} We will denote by $L_{\theta_k}$ an upper bound for $\left\|\layerPartialWeight{k}\right\|$ and by $L_{x_k}$ an upper bound for $\left\|\layerPartialInput{k}\right\|$. We can now introduce \cref{alg:layer_sensitivity} to compute the sensitivity $\Delta_k$ of layer $k$.  Here we denote by $X_k$ the maximal possible norm of $x_k$, i.e., for all possible inputs $x_k$, $\|x_k\|=\left\|(f^{(k-1)}_{\theta_{k-1}}\circ \ldots \circ f_{\theta_1}^{(1)})(x_1)\right\|\le X_k$, with $X_1$ the norm on which we scale every input $x_1$.
% It capitalizes on a forward pass to compute the maximal input norms $X_k$ --- which depends on the previous layer range, in the case of normalization or activation layers, or on $u_{k-1}^{\theta} = \min(C, \|\tilde{\theta}_{k-1}\|)$ (see \cref{alg:weight-dpsgd}), in the case of linear or convolutional layers (one can verify for each type of layer that $u_{k-1}^{\theta}$ is an upper bound for $\|x_{k}\|/\|x_{k-1}\|$) --- and a backward pass applying Equation \ref{eq:layer_sensitivity}.

% \input{subfiles/algos/layer_sensitivity.tex}

\subsection{Backpropagation}\label{subsec:backprop}

% Modern deep learning libraries, including PyTorch \cite{pytorch} and TensorFlow \cite{tensorflow2015-whitepaper}, utilize efficient backpropagation implementations grounded in automatic differentiation \cite{rall_automatic_1981}.
Consider a feed-forward network $f_\theta$.  We define $\mathcal{L}_k(\theta,(x_k,y))=\ell\left(\left(f_{\theta_K}^{(K)}\circ \ldots \circ f_{\theta_k}^{(k)}\right)(x_k), y\right)$.
For feed-forward networks, the chain rule gives:
% todo: update with notation from alg
\begin{equation}\label{eq:back_prop}
  \begin{aligned}
    \lossPartialInput{k} & = \frac{\partial\ell}{\partial x_{K+1}}\frac{\partial f_{\theta_K}^{(K)}}{\partial x_k}.
  \end{aligned}
\end{equation}
Any matrix or vector norm is submultiplicative, especially the $\|\cdot\|_{p}$ norm with $p \in \{1, 2, +\infty\}$, hence:
\begin{equation}\label{eq:back_prop}
  \begin{aligned}
    \left\|\lossPartialInput{k}\right\|_{p} & \leq \left\| \frac{\partial\ell}{\partial x_{K+1}}\right\|_{p} \left\| \frac{\partial f_{\theta_K}^{(K)}}{\partial x_k}\right\|_{p}.
  \end{aligned}
\end{equation}
In \cref{subsec:lip_values}, we show that the Lipschitz value with regard to the input of $f_{\theta_k}^{(k)}$ is bounded and the bound depends on the norm of the parameters
in the case of linear or convolutional layers. Let $c_k$ be the Lipschitz value with regards to the input of any layer $k$.
As $f_{\theta_k}$ is Lipschitz constrained, when the layer $k$ has parameters, $c_k$ is a linear function of $\min(C, \|\theta_k\|)$,
 as stated in \cref{eq:linear_lip,eq:conv_lip_input}, with $C$ the maximum weight norm.
If $\left\| \frac{\partial\ell}{\partial x_{K+1}}\right\|_{p} \leq \tau$, where $\ell$ represents the loss then,
\begin{equation}\label{eq:backprop.ineq.partial_input}
  \begin{aligned}
    \left\|\lossPartialInput{k}\right\|_{p} & \leq \tau \prod_{i=k}^K c_i.
  \end{aligned}
\end{equation}

We now consider the induced $\|\cdot\|_{2,p}$ norm ($\|A\|_{\alpha, \beta}=\sup _{x \neq 0} \frac{\|A x\|_\beta}{\|x\|_\alpha}$) for studiying $\lossPartialWeight{k}$. Induced norms are consistent, then one can prove that $\|A B\|_{\alpha, \gamma} \leq\|A\|_{\beta, \gamma}\|B\|_{\alpha, \beta}$ \cite{two-to-infinity-norm} which, applied to the chain rule, gives:
\begin{equation}\label{eq:backprop.ineq.partial_weight}
  \begin{aligned}
    \left\|\lossPartialWeight{k}\right\|_{2,p} & \leq \left\|\lossPartialInput{k+1}\right\|_{p}\left\|\layerPartialWeight{k}\right\|_{2, p}
  \end{aligned}
\end{equation}

Combining \ref{eq:backprop.ineq.partial_input} and \ref{eq:backprop.ineq.partial_weight} 
provides an upper bound of the $2,p$-norm of the gradient at layer $k$,
\begin{equation}\label{eq:backprop.ineq.grad}
  \begin{aligned}
    \left\|\lossPartialWeight{k}\right\|_{2,p} & \leq \tau \prod_{i=k+1}^K c_i\left\|\layerPartialWeight{k}\right\|_{2, p}
  \end{aligned}
\end{equation}

If $\theta_k \neq \emptyset$, then \cref{eq:linear_lip,eq:conv_lip_weight} show that the upper bound of $\left\|\layerPartialWeight{k}\right\|_{2, p}$
depends on $\|\vec{x_k}\|_2$. Hence,

\begin{equation}\label{eq:backprop.ineq.grad.modified}
  \begin{aligned}
    \left\|\lossPartialWeight{k}\right\|_{2,p} & \leq \tau \prod_{i=k+1}^K c_i X_k,
  \end{aligned}
\end{equation}

with $X_k = \sqrt{hw}\max\limits_{x_k \in V_k}\|\vec{x_k}\|_2$
with $hw=1$ in case of a linear layer and $V_k = \{(f_k\circ \ldots \circ f_1)(x)|(x,y)\in V\}$.

\subsection{\weightDpSgd}\label{subsec:lipdpsg}
We introduce in \cref{alg:weight-dpsgd} a novel differentially private stochastic gradient descent algorithm, called \weightDpSgd, that leverages the estimation of the per-layer sensitivity of the model to provide differential privacy without gradient clipping.

\begin{algorithm}[htb]
   \caption{$\textsc{\weightDpSgd}$: Differentially Private Stochastic Gradient Descent with Lipschitz constrains.}
   % \label{alg:\gradDpSgd-logloss-l1}
   \label{alg:weight-dpsgd}
   \begin{algorithmic}[1]
     \State {\bfseries Input:}
     Data set $\dset\in\dsetspace$, feed-forward model $f_\theta$, loss function $\ell$ with Lipschitz value $\tau$, hypothesis space $\Theta\subseteq \mathbb{R}^k$, number of epochs $T$, noise multiplier $\sigma$, batch size $\batchsize\ge 1$, learning rate $\eta$, norm $p \in \{1, 2, \infty\}$, max weight norm $C$.
  %constant $\vLipschitz\in\mathbb{R}_+^p$,
   \State Initialize $\tilde{\theta}$ randomly from $\Theta$
   % $\widetilde{\theta}^1$: Choose any point from $\mathcal{C}$.
   % \State{$l \leftarrow (C)_{1\leq k \leq K}$}\Comment{max gradient norm per layer}
   \State $(c_k,\tilde{\theta_k})_{k=1}^K \gets$ \textsc{ClipWeights}($\tilde{\theta}$,$C$, $p$)
     %      \For{$t=1$ {\bfseries to} $T$}
   % \While{privacy budget $(\epsilon, \delta)$ not reached}
   \For{$t \in [T]$}
   % \STATE Pick $(x, y) \thicksim_u Z$ with replacement.

   \State $(\Delta_k)_{k=1}^K \leftarrow (\tau\prod_{i=k}^{K} c_i)_{k=1}^K$ \Comment{Lipschitz value at layer $k$, see \cref{eq:conv_lip_input}}
%   \State{$\Delta_K, \Delta_{K-1}, \dots, \Delta_1 \leftarrow$ \textsc{ComputePerLayerSensitivity}$(f, \tilde{\theta}^{(t)}, l^{(t)})$}
   \State $\batchset \gets \emptyset$ \Comment{Poisson sampling}
   \While{$\batchset=\emptyset$}
   \For{\label{ln:lipdpsgd.poisson}$z\in Z$}
   \State With probability $s/|Z|$: $\batchset\gets\batchset\cup\{z\}$
   \EndFor
   \EndWhile
   %\State{Draw sample $\batchset = {(x_1, y_1),\dots, (x_s, y_s)}$}\Comment{Sample batch with Poisson sampling}
   \For{$k=1\ldots K$} \Comment{Compute gradient per layer}
   \State $X_k \leftarrow \max\limits_{i \in |\batchset|} \|f_{\tilde{\theta}_{k-1}}(x_i)\|_2$ \Comment{Max input norm of layer $k$, see \cref{eq:backprop.ineq.grad.modified}}\;
   \State \label{ln:draw.noise}Draw $b_k \thicksim \mathcal{N}(0, \sigma^2 \Delta_k^2 \mathbbm{I})$
   \State $\tilde{g}_k \leftarrow \frac{1}{|\batchset|}\left(\frac{1}{X_k}\sum_{i=1}^{|\batchset|} \nabla_{\tilde{\theta}_k}\ell(f_{\tilde{\theta}}(x_i), y_i)+b_k\right)$ \Comment{DP gradient, see \cref{eq:gradient.modified}}
   \State $\tilde{\theta}_k \leftarrow \tilde{\theta}_k - \eta(t) \tilde{g}_k$ \Comment{Update}
   % \State $\tilde{\theta}_k \gets$ \textsc{ClipWeights}($\tilde{\theta}_k $,$C$)\Comment{Spectral normalization to enforce Lipschitzness}\;
%   \State{$\tilde{\theta}_k, l_k \leftarrow \prod( \tilde{\theta}_k - \eta\tilde{g}_k)$}\Comment{Normalize if spectral norm greater than $C$, store value in $l_k$  otherwise.}
   \EndFor
   \State $(c_k,\tilde{\theta_k})_{k=1}^K \gets$ \textsc{ClipWeights}($\tilde{\theta}$,$C$, $p$)\Comment{Enforce Lipschitzness}
   % \EndWhile
   \EndFor
   \State {\bfseries Output:} $\tilde{\theta}$ and compute  $(\epsilon, \delta)$ with privacy accountant.
  \Function{ClipWeights}{$\tilde{\theta}$, $C$}
   \For{$k=1\ldots K$}
   \If{$\theta_k \neq \emptyset$}
   \State \label{ln:setThetaNorm}$c_k\gets \min(C,\|\tilde{\theta}_k\|_p)$
   \State $\tilde{\theta}_k\gets c_k\tilde{\theta}_k/\|\tilde{\theta}_k\|_p$
   \Else
   \State $c_k \leftarrow 1$
   \EndIf
   \EndFor
   \State \textbf{return} $(c_k,\tilde{\theta_k})_{k=1}^K$
   \EndFunction
\end{algorithmic}
\end{algorithm}

\paragraph{Differential Privacy.}  From \cref{eq:backprop.ineq.grad.modified} follows,
\begin{equation}\label{eq:gradient.modified}
  \forall (x, y) \in \mathcal{X}\times\mathcal{Y}, \quad \left\|\frac{\nabla_{\tilde{\theta}_k}\ell(f_{\tilde{\theta}}(x), y)}{X_k} \right\|_{2,p} \leq \tau\prod_{i=k+1}^{K}c_i
\end{equation}
The left-hand side of \cref{eq:gradient.modified} represents the gradient scaled by the maximum input norm. 
The $\ell_{2,p}$-sensitivity of this scaled gradient is bounded, 
with the upper-bound depending solely on the Lipschitz constant of the loss function $\tau$, 
the maximum parameter norm $C$, and/or the parameter norm $\|\tilde{\theta_k}\|_{2,p}$.

\begin{theorem}
  Given a feed-forward model $\fnnFunc$ composed of Lipschitz constrained operators and a Lipschitz loss $\ell$, $\textsc{\weightDpSgd}$ is differentially private.
\end{theorem}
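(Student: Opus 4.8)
The plan is to establish privacy by combining three standard ingredients --- the Gaussian mechanism of Lemma~\ref{lem:gaussmech}, privacy amplification by Poisson subsampling, and adaptive composition --- while crucially exploiting that every sensitivity-related quantity in Algorithm~\ref{alg:weight-dpsgd} is computed in a \emph{data-independent} way. First I would observe that the only steps touching the dataset $\dset$ are the Poisson sampling of the batch $\batchset$ and the formation of the summed per-layer gradient $\sum_i \nabla_{\tilde{\theta}_k}\ell(f_{\tilde{\theta}}(x_i),y_i)$; everything else --- the two \textsc{ClipWeights} calls, the \textsc{LayerSensitivity} call of Algorithm~\ref{alg:layer_sensitivity}, and hence each $\Delta_k$ --- is a deterministic function of the current parameters $\tilde{\theta}$ and of public constants ($C$, $X_1$, $l_{K+1}$). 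Consequently the noise scale $\sigma\Delta_k$ is public given $\tilde{\theta}$, and the model update $\tilde{\theta}_k \gets \tilde{\theta}_k-\eta(t)\tilde{g}_k$ together with the ensuing clipping is mere post-processing of the noisy gradients, which by the post-processing property of differential privacy cannot weaken the guarantee.

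Next I would verify that the noise is correctly calibrated at each release. By the backpropagation inequalities \eqref{eq:layer_inequality} and the per-layer Lipschitz bounds of Section~\ref{subsec:lip_values}, the weight constraint $\tilde{\theta}\in\thetaSpaceLeC$ enforced by \textsc{ClipWeights} together with the bounded input norm $X_1$ guarantees that \textsc{LayerSensitivity} returns $\Delta_k \ge \max_{x_k}\|\lossPartialWeight{k}\|_2$ for every admissible input: this is exactly the chain \eqref{eq:layer_sensitivity}, with $X_k$ propagated by the forward pass and $l_k$ by the backward pass. Because this bound is uniform over inputs and obtained without reading $\dset$, it holds simultaneously for both members of any adjacent pair $(\dseta,\dsetb)$. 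Hence, as noted after \eqref{eq:layer_sensitivity}, a single differing instance changes $\sum_i \nabla_{\tilde{\theta}_k}\ell$ by at most one per-sample gradient and the $\ell_2$-sensitivity is controlled by $\Delta_k$ (equivalently $\sensitivity\le 2\Delta_k$ under the replacement view), so adding $b_k\sim\mathcal{N}(0,\sigma^2\Delta_k^2\mathbbm{I})$ realises a Gaussian mechanism to which Lemma~\ref{lem:gaussmech} applies.

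I would then assemble the per-iteration guarantee and compose. Within one iteration the $K$ layer-wise releases use independent noise on disjoint coordinate blocks of the gradient, so together they form a single Gaussian mechanism on the full gradient vector with per-block sensitivity $\Delta_k$; the Poisson sampling of $\batchset$ supplies the usual amplification by subsampling at rate $s/|\dset|$. Composing the resulting $T$ (subsampled Gaussian) iterations then yields the claim. The accountant invoked in the output line --- e.g.\ a R\'enyi/moments accountant for the subsampled Gaussian mechanism --- converts the accumulated divergence into a concrete $(\epsilon,\delta)$ pair.

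The step I expect to be the main obstacle is justifying the composition \emph{rigorously in the adaptive setting}: the sensitivity $\Delta_k$ used at iteration $t$ is a function of $\thetat$, which itself depends on the noisy gradients released in iterations $1,\dots,t-1$ and therefore, indirectly, on the data. The resolution is to condition on the transcript of past releases: given that transcript, $\thetat$ is fixed, and because \textsc{LayerSensitivity} is data-independent, $\Delta_k^{(t)}$ is a deterministic valid sensitivity bound for the $t$-th mechanism. Each step is thus a bona fide (subsampled) Gaussian mechanism with publicly known noise scale, which is precisely the hypothesis under which adaptive composition theorems --- and the moments/R\'enyi accountant --- apply. The only genuine assumptions needed are those in the statement, namely that $f_\theta$ is built from Lipschitz-constrained operators, that $\ell$ is Lipschitz, and that the input norm is bounded by $X_1$, since these are exactly what keep the data-independent bounds $\Delta_k$ finite.
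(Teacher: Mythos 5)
Your argument is correct and follows essentially the same route as the paper, which justifies the theorem only with a brief remark that the Gaussian mechanism is used and that the sensitivities $\Delta_k$ are data-independent post-processing of the already-privatized parameters, with Poisson amplification and the moments accountant handling composition. Your write-up simply makes explicit the steps the paper leaves implicit (per-layer block structure, validity of $\Delta_k$ as a sensitivity bound via \eqref{eq:layer_sensitivity}, and the conditioning-on-the-transcript resolution of the adaptive-composition subtlety), all of which are consistent with the paper's intended reasoning.
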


Indeed, the scaled gradient's sensitivity is determined without any privacy costs, as it depends only on the current parameter values (which are privatized in the previous step, and post-processing privatized values doesn't take additional privacy budget) and not on the data.
If the selected norm is the \(2,2\)-norm, the Gaussian mechanism can be applied to ensure privacy by utilizing the sensitivity of the scaled gradient. Otherwise, the exponential mechanism can be employed to achieve differential privacy \cite{mcsherry2007mechanism} using a custom score function.
\paragraph{Privacy accounting.} \weightDpSgd{} adopts the same privacy accounting as \gradDpSgd. Specifically, the accountant draws upon the privacy amplification \cite{kasiviswanathan_what_2010} brought about by Poisson sampling and the Gaussian moment accountant \cite{abadi_deep_2016}. It's worth noting that while we utilized the Renyi Differential Privacy (RDP) accountant \cite{abadi_deep_2016,mironov_renyi_2019} in our experiments, \weightDpSgd{} is versatile enough to be compatible with alternative accountants.
Note that RDP is primarily designed to operate with the Gaussian mechanism, but \cite{pmlr-v89-wang19b} demonstrate its applicability with other differential privacy mechanisms, particularly the exponential mechanism.

\paragraph{Requirements.} As detailed in Section \ref{subsec:lip_values}, the loss and the model operators need to be Lipschitz. We've enumerated several losses and operators that meet this criterion in the Appendix \cref{tab:lip}. While we use $\textsc{ClipWeights}$ to characterize Lipschitzness \cite{yoshida_spectral_2017,miyato2018spectral} in our study \ref{subsec:lip_values}, other methods are also available, as discussed in \cite{arjovsky_wasserstein_2017}.

\paragraph{ClipWeights.} The $\textsc{ClipWeights}$ function is essential to the algorithm, ensuring Lipschitzness, which facilitates model sensitivity estimation. As opposed to standard Lipschitz-constrained networks \cite{yoshida_spectral_2017,miyato2018spectral} which increase or decrease the norms of parameters to make them equal to a pre-definied value, our approach normalizes weights only when their current norm exceeds a threshold. This results in adding less DP noise for smaller norms. Importantly, as $\tilde{\theta}$ is already made private in the previous iteration, its norm is private too.
Note that, when the norm used is the $2$-norm i.e., $\textsc{ClipWeights}$ is a spectral normalization, we perform also a Björck orthogonalization for  for fast and near-orthogonal convolutions \cite{orthogonal_convolution}.
% there's no privacy concern, as parameters, being private, ensure their norm's privacy.
%Moreover, to optimize computation, we limit spectral normalizations, leveraging stored parameter norms if they fall below the threshold, hence the utilization of $\thetanorm$ in \cref{alg:weight-dpsgd}.

\paragraph{Computing norms} 
The $\ell_{2,1}$ and $\ell_{2,\infty}$ norms can be computed exactly in linear time relative to the number of elements. However, calculating the $\ell_{2,2}$ norm, which corresponds to the largest singular value of the matrix, is infeasible using standard singular value decomposition techniques. To address this efficiently, we use techniques based on power method \cite{gouk2020regularisation,scaman_lipschitz_2019} that leverage the backward computational graph of modern deep learning libraries like \cite[TensorFlow]{tensorflow2015-whitepaper} and \cite[PyTorch]{pytorch}. 

% [merge with following and add orthonormalization]

% \paragraph{Computation techniques.} For both \cref{alg:layer_sensitivity} and  $\textsc{ClipWeights}$ it's crucial to compute the greatest singular matrix values efficiently. A renowned technique is the \textit{power method} \cite{MisesPraktischeVD}. If this isn't sufficiently fast,
% %when dealing with convolutions, the related matrix might be of large dimensions, rendering the direct application of the power method infeasible. To address this,
% the \textit{power method} can be enhanced using AutoGrad \cite{scaman_lipschitz_2019}.
% Another idea is to use the Frobenius norm, which is faster to compute but may have drawbacks in terms of tightly bounding the norm.
% As computing spectral norms is relatively costly, we avoid to recompute them by storing them in $\thetanorm$ in \cref{alg:weight-dpsgd}.

\subsection{Avoiding the bias of gradient clipping}
\label{sec:avoid.bias}

Our \weightDpSgd{} algorithm finds a local optimum (for $\theta$) of $F(\theta,Z)$ in $\thetaSpaceLeC$ while \gradDpSgd{} doesn't necessarily find a local optimum of $F(\theta,Z)$ in $\thetaSpace$.
In particular, we prove in Appendix \ref{app:bias} the following
\newcommand{\thmStmWeightClipConverges}{
  Let $F$ be an objective function as defined in Section \ref{sec:erm}, and $Z$, $f_\theta$, $\mathcal{L}$, $\Theta=\thetaSpaceLeC$, $T$, $\sigma=0$, $s$, $\eta$ and $C$ be input parameters of \weightDpSgd{} satisfying the requirements specified in Section \ref{subsec:lipdpsg}.
  Assume that for these inputs \weightDpSgd{} converges to a point $\theta^*$ (in the sense that $\lim_{k,T\to\infty} \theta_k = \theta^*$).
  Then, $\theta^*$ is a local optimum of $F(\theta,Z)$ in $\thetaSpaceLeC$.
%  For any objective function $F$ as defined in Section \ref{sec:erm}, \weightDpSgd{} converges to a local optimum of $F(\theta,Z)$ in $\thetaSpaceLeC$.
}
\begin{theorem}
\label{thm:weightClipConverges}\thmStmWeightClipConverges{}
\end{theorem}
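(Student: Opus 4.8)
The plan is to recognize that, once we set $\sigma=0$, \weightDpSgd{} is nothing but projected stochastic gradient descent on the convex feasible set $\thetaSpaceLeC$, and then to invoke the first-order (variational-inequality) characterization of the points to which such a scheme can converge. First I would check that the $\textsc{ClipWeights}$ routine computes exactly the Euclidean projection onto $\thetaSpaceLeC$. Write $\mathcal{C}=\thetaSpaceLeC=\{\theta:\ \|\theta_k\|\le C,\ k=1\ldots K\}$; this is a Cartesian product of $\ell_2$-balls, hence closed and convex, and the per-layer map $\theta_k\mapsto\min(C,\|\theta_k\|)\,\theta_k/\|\theta_k\|$ is precisely the projection of $\theta_k$ onto the ball of radius $C$. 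Consequently $\textsc{ClipWeights}(\cdot,C)=\Pi_{\mathcal{C}}$, the projection onto $\mathcal{C}$, which is single-valued and nonexpansive. With $\sigma=0$ the noise terms $b_k$ vanish, so each outer iteration is the projected update $\theta^{(t+1)}=\Pi_{\mathcal{C}}\bigl(\theta^{(t)}-\eta(t)\,g^{(t)}\bigr)$, where $g^{(t)}$ is the mini-batch gradient; by the Lipschitz and differentiability requirements of Section \ref{subsec:lipdpsg}, $\nabla F$ is continuous and bounded near $\theta^*$, and Poisson subsampling makes $g^{(t)}$ an unbiased estimate of the full gradient $\nabla F(\theta^{(t)})$ (the regularization term being added deterministically).

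Next I would use the variational characterization of the projection: $p=\Pi_{\mathcal{C}}(y)$ iff $\langle y-p,\ \theta-p\rangle\le 0$ for every $\theta\in\mathcal{C}$. Applying this with $y=\theta^{(t)}-\eta(t)g^{(t)}$ and $p=\theta^{(t+1)}$ and rearranging gives, for all $\theta\in\mathcal{C}$,
\[ \langle g^{(t)},\ \theta-\theta^{(t+1)}\rangle\ \ge\ \tfrac{1}{\eta(t)}\,\langle \theta^{(t+1)}-\theta^{(t)},\ \theta-\theta^{(t+1)}\rangle . \]
The convergence hypothesis $\theta^{(t)}\to\theta^*$ forces the increment $\theta^{(t+1)}-\theta^{(t)}\to 0$, so the right-hand side vanishes in the limit (using that $\eta(t)$ is bounded away from $0$); passing to the limit on the left, with $\theta^{(t+1)}\to\theta^*$ and $\nabla F$ continuous, yields the first-order optimality condition $\langle \nabla F(\theta^*),\ \theta-\theta^*\rangle\ge 0$ for all $\theta\in\mathcal{C}$, equivalently $-\nabla F(\theta^*)\in N_{\mathcal{C}}(\theta^*)$ (the normal cone), equivalently $\theta^*=\Pi_{\mathcal{C}}\bigl(\theta^*-\eta\nabla F(\theta^*)\bigr)$. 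This is exactly the stationarity condition for $F$ constrained to $\thetaSpaceLeC$.

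The hard part, and where I would spend most of the care, is the passage to the limit of the left-hand side, because $g^{(t)}$ is a \emph{stochastic} gradient that does not converge deterministically to $\nabla F(\theta^*)$: the batch $V_t$ keeps changing even as $\theta^{(t)}\to\theta^*$. To close this gap I would replace the raw gradients by conditional expectations, using $\mathbb{E}[g^{(t)}\mid\theta^{(t)}]=\nabla F(\theta^{(t)})$ together with the uniform boundedness of $g^{(t)}$ (a consequence of Lipschitzness) and the continuity of $\nabla F$ at $\theta^*$, so that the averaged form of the variational inequality survives in the limit; alternatively one argues along the ODE/Robbins--Monro method that a convergent projected-SGD trajectory can only rest at a zero of the projected mean field.

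A secondary subtlety is that for nonconvex $F$ the derived inequality is the first-order \emph{necessary} condition for local optimality rather than a certificate of a strict local minimum; I would either read ``local optimum'' in this first-order (KKT) sense, or additionally invoke the descent property of the iterates---the objective being non-increasing in expectation along the trajectory, which settles at $\theta^*$---to exclude convergence to a point from which $F$ strictly decreases within $\mathcal{C}$. By contrast, the corresponding fixed-point argument fails for \gradDpSgd{}, since per-sample gradient clipping perturbs each summand and the resulting update is no longer a projected (sub)gradient step for $F$ on $\thetaSpace$; this is the asymmetry the theorem is meant to expose.
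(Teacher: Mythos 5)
Your proof is correct and reaches the same first-order stationarity condition as the paper, but by a genuinely different route. The paper's proof (Appendix \ref{app:bias}) is a Lagrangian argument: it introduces a slack variable $\zeta$ for the constraint $\|\theta\|_2\le C$, writes down the stationarity conditions of the Lagrangian, and then argues by cases ($\zeta>0$: interior point, no clipping occurs, so the limit is an unconstrained stationary point; $\zeta=0$: boundary, where it argues informally that a gradient step followed by rescaling to norm $C$ leaves $\theta$ fixed only when $\nabla_\theta F$ is normal to the sphere). You instead observe that \textsc{ClipWeights} is exactly the Euclidean projection onto the product of balls $\thetaSpaceLeC$, so the noiseless iteration is projected gradient descent, and you extract stationarity of the limit from the variational characterization of the projection. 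Your route is cleaner in two respects: it treats the interior and boundary cases uniformly through the normal cone, and it respects the actual per-layer constraint structure $\|\theta_k\|\le C$ for each $k$, which the paper's sketch silently collapses into a single constraint. Both arguments share the same gaps, about which you are more explicit: the mini-batch gradient $g^{(t)}$ does not converge pointwise even when $\theta^{(t)}$ does (the paper ``makes abstraction of the noise''; you propose conditioning on $\theta^{(t)}$ or an ODE/Robbins--Monro argument), and what is actually established is first-order stationarity (a KKT point) rather than local optimality in a strict sense. One minor slip: the sign of the right-hand side of your displayed variational inequality is off --- it should read $\tfrac{1}{\eta(t)}\langle \theta^{(t)}-\theta^{(t+1)},\ \theta-\theta^{(t+1)}\rangle$ --- but since that term vanishes in the limit the conclusion is unaffected.
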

Essentially, making abstraction of the unbiased DP noise, the effect of scaling weight vectors to have bounded norm after a gradient step is equivalent to projecting the gradient on the boundary of the feasible space if the gradient brings the parameter vector out of $\thetaSpaceLeC$.

Furthermore, \cite{chen_understanding_2021} shows an example showing that gradient clipping can introduce bias.  We add a more detailed discussion in Appendix \ref{app:bias}.
Hence, \gradDpSgd{} does not necessarily converge to a local optimum of $F(\theta,Z)$, even when sufficient data is available to estimate $\theta$.  While \weightDpSgd{} can only find models in $\thetaSpaceLeC$ and this may introduce another suboptimality, as our experiments will show this is only a minor drawback in practice, while also others observed that Lipschitz networks have good properties \cite{bethune_pay_2022}.  Moreover, it is easy to check whether \weightDpSgd{} outputs parameters on the boundary of $\thetaSpaceLeC$ and hence the model could potentially improve by relaxing the weight norm constraint.  In contrast, it may not be feasible to detect that \gradDpSgd{} is outputting potentially suboptimal parameters.  Indeed, consider a federated learning setting (e.g., \cite{Bonawitz2017a}) where data owners collaborate to compute a model without revealing their data.  Each data owner locally computes a gradient and clips it, and then the data owners securely aggregate their gradients and send the average gradient to a central party updating the model.  In such setting, for privacy reasons no party would be able to evaluate that gradient clipping introduces a strong bias in some direction.  Still, our experiments show that in practice at the time of convergence for the best hyperparameter values clipping is still active for a significant fraction of gradients (See Appendix \ref{app:exp.clipfreq})

\section{Experimental results}\label{sec:experiment}
In this section, we conduct an empirical evaluation of our approach.

\subsection{Experimental setup}
\label{sec:exp.setup}

We consider the following experimental questions:
\begin{itemize}[noitemsep,topsep=0pt]
\item[Q1] How does \weightDpSgd{}, our proposed technique, compare against the conventional \gradDpSgd{} as introduced by \cite{abadi_deep_2016}?
\item[Q2] What is the effect of allowing $\|\theta_k\| < C$ rather than normalizing $\|\theta_k\|$ to $C$?  This question seems relevant given that some authors
(e.g., \cite{bethune_pay_2022}) also suggest to consider networks which constant gradient norm rather than maximal gradient norm, i.e., roughly with $\theta$ in
%papers consider only models from
$\thetaSpaceEqC$ rather than $\thetaSpaceLeC$. % \cite{bethune_pay_2022}.
% \item[Q3] Does \weightDpSgd{} induce less bias than \gradDpSgd{}?
% \item[Q3] What is the impact of using norms other than the \(2\)-norm?
\end{itemize}

\paragraph{Norms.}\label{exp:norm} In this section, we focus on the \(\ell_{2,2}\) norm. All subsequent results are based on this norm. Although computing this norm is more computationally intensive, it allows for the use of the Gaussian mechanism, which is standard in this field.

\paragraph{Hyperparameters.}\label{exp:hyper} We selected a number of hyperparameters to tune for our experiments, aiming at making a fair comparison between the studied techniques while minimizing the distractions of potential orthogonal improvements.  To optimize these hyperparameters, we used Bayesian optimization \cite{balandat_botorch_2020}.
Appendix \ref{sec:app.exp.hyperparameters} provides a detailed discussion.

\paragraph{Datasets and models.} We carried out experiments
on both tabular datasets and datasets with image data.
%in two phases. In the initial phase, our focus was on widely-recognized
First, we consider a collection of 7 real-world tabular datasets (names and citations in Table \ref{tab:auc}).  For these, we trained multi-layer perceptrons (MLP). A comprehensive list of model-dataset combinations is available in the Appendix \cref{tab:data}.
To answer question Q2, we also implemented \globalWeightDpSgd{}, a version of \weightDpSgd{} limited to networks whose weight norms are fixed, i.e., $\forall k:\|\theta_k\|=C$, obtained by setting $\thetanorm_k\gets C$ in Line \ref{ln:setThetaNorm} in Algorithm \ref{alg:weight-dpsgd}.

Second, the image datasets used include MNIST \cite{deng2012mnist}, 
Fashion-MNIST \cite{xiao_fashion-mnist_2017}, and CIFAR-10 \cite{cifar}. 
We trained convolutional neural networks (CNNs) for the first two datasets
 and a Wide-ResNet \cite{Zagoruyko2016WRN} for the latter 
 (see details in \cref{tab:data}). We implemented all regularization 
 techniques as described in \cite{de2022unlocking}, including group 
 normalization \cite{Wu2018GroupN}, large batch size, weight 
 standardization \cite{qiao2020microbatchtrainingbatchchannelnormalization}, 
 augmentation multiplicity \cite{de2022unlocking}, 
 and parameter averaging \cite{Polyak1992AccelerationOS}. 
 These techniques are compatible with Lipschitz-constrained networks, 
 except for group normalization, for which we proposed an adapted version 
 in \cref{eq:safe.group.normalization}.\\
We opted for the accuracy to facilitate easy comparisons with prior research.

\paragraph{Infrastructure.} All experiments were orchestrated across dual Tesla P100 GPU platforms (12GB capacity), operating under CUDA version 10, with a 62GB RAM provision for Fashion-MNIST and CIFAR-10. Remaining experiments were performed  on an E5-2696V2 Processor setup, equipped with 8 vCPUs and a 52GB RAM cache. The total runtime of the experiments was approximately 50 hours, which corresponds to an estimated carbon emission of $1.96$ kg %, as reported in
\cite{lacoste2019quantifying}.
More details on the experimental setup and an analysis of the runtime can be found in Appendix \ref{sec:app.exp}.

\begin{figure*}[ht]
  \centering
  \mbox{
    \subfigure[MNIST\label{fig:mnist_perf}]{\includegraphics[width=0.32\linewidth]{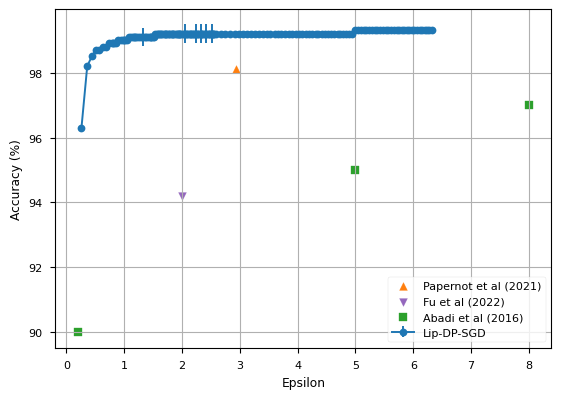}}\quad
    \subfigure[Fashion-MNIST\label{fig:FashionMNIST_perf}]{\includegraphics[width=0.32\linewidth]{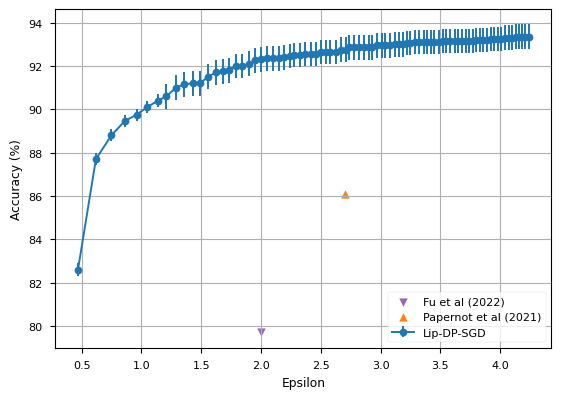}}\quad
    \subfigure[CIFAR-10\label{fig:cifar_perf}]{\includegraphics[width=0.32\linewidth]{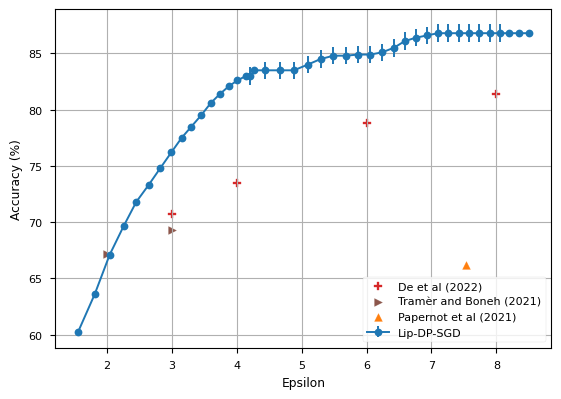}}
  }

  \caption{Accuracy results, with a fixed \(\delta = 10^{-5}\), for the MNIST (\ref{fig:mnist_perf}), Fashion-MNIST (\ref{fig:FashionMNIST_perf}), and CIFAR-10 (\ref{fig:cifar_perf}) test datasets. The plots show the median accuracy over 5 runs, with vertical lines indicating the standard error of the mean. See Appendix \ref{sec:app.exp.hyperparameters} for details on model specifications and hyperparameters.}
  \label{fig:big_perf}
\end{figure*}

\begin{table*}[h]
  \caption{Accuracy and $\epsilon$ per dataset and method at $\delta=1/n$, in bold the best result and underlined when the difference with the best result is not statistically significant at a level of confidence of 5\%.}
  \label{tab:auc}
  \centering
  \begin{tabular}{lrrrrrrr}
    \toprule
    Methods & & \gradDpSgd{}   & \weightDpSgd{} & \globalWeightDpSgd{}  \\
    Datasets (\#instances $n$ $\times$ \#features $p$) & $\epsilon$ &  &  &  \\
    \midrule
    Adult Income (48842x14) \cite{misc_adult_2} & 0.414 & 0.824 & \textbf{0.831} & 0.804 \\
    Android (29333x87) \cite{misc_naticusdroid_android_permissions_dataset_722} & 1.273 & 0.951 & \textbf{0.959} & 0.945 \\
    Breast Cancer (569x32) \cite{misc_breast_cancer_wisconsin_diagnostic_17} & 1.672 & 0.773 & \textbf{0.798} & 0.7 \\
    Default Credit (30000x24) \cite{misc_default_of_credit_card_clients_350} & 1.442 & 0.809 & \textbf{0.816} & 0.792 \\
    Dropout (4424x36) \cite{misc_predict_students'_dropout_and_academic_success_697} & 1.326 & 0.763 & \textbf{0.819} & 0.736 \\
    German Credit (1000x20) \cite{misc_statlog_german_credit_data_144} & 3.852 & 0.735 & \underline{0.746} & 0.768 \\
    Nursery (12960x8) \cite{misc_nursery_76} & 1.432 & 0.919 & \textbf{0.931} & 0.89 \\
    \bottomrule
  \end{tabular}
\end{table*}

\subsection{Results}\label{subsec:results}
% Figure \ref{fig:big_perf} shows our results for the image data sets. For each one of them we report results with or without group normalization applied with \gradDpSgd{} and \weightDpSgd{}.
%
% Table \ref{tab:auc} shows our results on the tabular data, comparing \gradDpSgd{} with \weightDpSgd{} and \globalWeightDpSgd{}.
%
%  Appendix \ref{sec:app.exp} presents a number of more detailed and complementary results.
%
% \subsection{Discussion}\label{subsec:discussion}

\paragraph{Image datasets.} In \cref{fig:big_perf}, \weightDpSgd{} 
surpasses all state-of-the-art results on all three image datasets. 
Previous performances were based on DP-SGD as introduced by \cite{abadi_deep_2016}, 
either combined with regularization techniques \cite{de2022unlocking} 
or with bespoke activation functions \cite{Papernot_Thakurta_Song_Chien_Erlingsson_2021}. Note that while we present 
results using the same set of hyperparameters for MNIST and Fashion-MNIST, 
the results for CIFAR-10 come from a Pareto front of two sets of 
hyperparameters. For epsilon values below 4.2, the results come from a 
Wide-ResNet-16-4, and for epsilon values above 4.2, they come from a 
Wide-ResNet-40-4. See the complete list of hyperparameters 
in \cref{sec:app.exp.hyperparameters}.

\paragraph{Tabular datasets.} In \cref{tab:auc}, we perform a Wilcoxon Signed-rank test, at a confidence level of $5$\%, on $10$ measures of accuracy for each dataset between the \gradDpSgd{} based on the gradient clipping and the \weightDpSgd{} based on our method. \weightDpSgd{} consistently outperforms \gradDpSgd{} in terms of accuracy. This trend holds across datasets with varying numbers of instances and features, including tasks with imbalanced datasets like Dropout or Default Credit datasets. While highly impactful for convolutional layers, group normalization does not yield improvements for either \gradDpSgd{} or \weightDpSgd{} in the case of tabular datasets. See Appendix \ref{app:gn_table} for complete results.

Additionally, \cref{tab:auc} presents the performance achieved by constraining networks to Lipschitz networks, where the norm of weights is set to a constant, denoted as \globalWeightDpSgd{}. The results from this approach are inferior, even when compared to \gradDpSgd{}.

\paragraph{Conclusion.} In summary, our experimental results demonstrate that \weightDpSgd{} sets new state-of-the-art benchmarks on the three most popular vision datasets, outperforming \gradDpSgd{}. Additionally, \weightDpSgd{} also outperforms \gradDpSgd{} on tabular datasets using MLPs, where it is advantageous to allow the norm of the weight vector \(\theta\) to vary rather than normalizing it to a fixed value, leveraging situations where it can be smaller.

%\textbf{Scaling strategies. } By default, we used the smallest available parameter norm at each step, denoted as $\thetanorm$, where $\thetanorm_k = \min(C, \|\theta\|_2)$, as outlined in \cref{alg:weight-dpsgd}. This "local sensitivity" scaling strategy differs from the "global sensitivity" approach, which always sets $\thetanorm_k = C$, potentially reducing runtime but not maximizing the current position of $\theta$ in the search. Testing on tabular datasets, the local method generally outperforms the global strategy at a 5\% confidence level, except for three datasets where differences are not significant. Detailed results are in Appendix \ref{tab:auc-localglobal}.

\section{Related Work}
\label{sec:related}

\textbf{DP-SGD.}
DP-SGD algorithms have been developped to guarantee privacy on the final output \cite{chaudhuri_differentially_nodate}, on the loss function \cite{kifer_private_2012} or on the publishing of each gradient used in the descent \cite{bassily_differentially_2014,abadi_deep_2016}.

To keep track of the privacy budget consumption, \cite{bassily_differentially_2014} relies on the strong composition theorem \cite{dwork_boosting_2010} while \cite{abadi_deep_2016} is based on the moment accountant and gives much tighter bounds on the privacy loss than \cite{bassily_differentially_2014}.

This has opened an active field of research that builds upon \cite{abadi_deep_2016} in order to provide better estimation of the hyperparameters e.g., the clipping norm \cite{mcmahan_communication-efficient_2017,andrew_differentially_2022-1}, the learning rate \cite{koskela_learning_2020}, or the step size of the privacy budget consumption \cite{lee_concentrated_2018,chen_stochastic_2020,yu_differentially_2019}; or to enhance performance with regularization techniques \cite{de2022unlocking}. Gradient clipping remains the standard approach, and most of these ideas can be combined with our improvements.

\textbf{Lipschitz continuity.} Lipschitz continuity is an essential 
requirement for differential privacy in some private SGD algorithms 
\cite{bassily_differentially_2014}. However, since deep neural networks 
(DNNs) have an unbounded Lipschitz value \cite{scaman_lipschitz_2019}, 
it is not possible to use it to scale the added noise. Several techniques 
have been proposed to enforce Lipschitz continuity to DNNs, 
especially in the context of generative adversarial networks (GANs) 
\cite{miyato2018spectral,gouk2020regularisation}. These techniques, 
which mainly rely on weight clipping, can be applied to 
build DP-SGD instead of the gradient clipping method, as described in 
Section \ref{sec:approach} discusses how \cite{bethune2023dpsgd} 
proposes several concepts related to our Fix-Lip-DP-SGD variant. 
However, their use of an upper bound on the input norm limits their 
ability to achieve competitive results and extend beyond the \(2\)-norm. 
In contrast, our paper demonstrates that Lip-DP-SGD outperforms 
Fix-Lip-DP-SGD (\cref{tab:auc}), showcases the strong synergy between 
weight normalization and regularization techniques (see \cref{fig:big_perf}), enables the theoretical use of the $\ell_{2,1}$ and the $\ell_{2,\infty}$-norms, and 
illustrates how the scaled gradient (\cref{eq:gradient.modified}) 
we employ unlocks high-accuracy for differentially private machine learning.
%\section{Limitations}\label{sec:limitations}
%We acknowledge that our bounds on Lipschitz values could be refined and the spectral norm computation could be further optimized. 

\section{Conclusion and discussion}
\label{sec:concl}

In this paper we proposed a new differentially private stochastic gradient descent algorithm without gradient clipping.  We derived a methodology to estimate the gradient sensitivity to scale the noise. An important advantage of weight clipping over gradient clipping is that it avoids the bias introduced by gradient clipping and the algorithm converges to a local optimum of the objective function.  We showed empirically that this yields a significant improvement in practice and we argued that this approach circumvent the bias induced by classical gradient clipping.

Several opportunities for future work remain.  First, it would be interesting to
better integrate and improve ideas such as in  \cite{scaman_lipschitz_2019} to
%improve the efficiency of the algorithm in \cite{scaman_lipschitz_2019} to
find improved bounds on gradients of Lipschitz-constrained neural networks, as this may allow to further reduce  the amount of noise needed.

Second, various optimizations of the computational efficiency are possible.
Currently one of the most important computational tasks is the computation of the spectral norm.  Other approaches to more efficiently compute or upper bound it can be explored. One obvious one is to use directly the $\ell_{2, 1}$ and $\ell_{2,\infty}$ norms with our method, which would require to implement the exponential mechanism on the scaled gradient.

Our current work is limited to the application of our proposed method on feed-forward models for classification tasks and regression tasks with Lipschitz loss function. Although our method can be easily applied to some other tasks, the field remains open to extend it to other classes of models.

Finally, while our experiments have shown promising results, further theoretical analysis of \weightDpSgd{}, especially the interaction between sensitivity, learning rate and number of iterations, remains an interesting area of research, similar to the work of \cite{song_evading_2020} on \gradDpSgd{}.
An analysis on the interactions between hyperparameters would provide valuable insights into the optimal use of our method and its potential combination with other regularization techniques.

\bibliographystyle{plain}
\bibliography{biblio}

\newpage
\appendix
\section{Gradient clipping based DP-SGD}
\label{sec:grad-dpsgd}

For comparison with Algorithm \ref{alg:weight-dpsgd},
Algorithm \ref{alg:grad-dpsgd} shows the classic DP-SGD algorithm based on gradient clipping.

\begin{algorithm}[htb]
   \caption{$\textsc{\gradDpSgd}$: Differentially Private Stochastic Gradient Descent with gradient clipping.}
   % \label{alg:dp-sgd-logloss-l1}
   \label{alg:grad-dpsgd}
   \begin{algorithmic}%[1]
     \State {\bfseries Input:}
     Data set $\dset\in\dsetspace$, model $f_\theta$, loss function $\mathcal{L}$, hypothesis space $\Theta\subseteq \mathbb{R}^k$, privacy parameters $\epsilon$ and $\delta$, noise multiplier $\sigma$, batch size $\batchsize\ge 1$, learning rate $\eta$, max gradient norm $C$
  %constant $\vLipschitz\in\mathbb{R}_+^p$,
   \State Initialize $\tilde{\theta}$ randomly from $\Theta$
   % $\widetilde{\theta}^1$: Choose any point from $\mathcal{C}$.
   % \State{$l \leftarrow (C)_{1\leq k \leq K}$}\Comment{max gradient norm per layer}
     %      \For{$t=1$ {\bfseries to} $T$}
   % \While{privacy budget $(\epsilon, \delta)$ not reached}
   \For{$t \in [T]$}
   % \STATE Pick $(x, y) \thicksim_u Z$ with replacement.
%   \State{$\Delta_K, \Delta_{K-1}, \dots, \Delta_1 \leftarrow$ \textsc{ComputePerLayerSensitivity}$(f, \tilde{\theta}^{(t)}, l^{(t)})$}
   \State $\batchset \gets \emptyset$ \Comment{Poisson sampling}
   \While{$S=\emptyset$}
   \For{$z\in Z$}
   \State With probability $s/|Z|$: $\batchset\gets\batchset\cup\{z\}$
   \EndFor
   \EndWhile
   \State Draw $b_k \thicksim \mathcal{N}(0, \sigma^2 C^2 \mathbbm{I})$
   \For{$i=1\ldots |\batchset|$} \Comment{Gradient clipping per sample}

   \State $\tilde{g}_{k,i}\gets
   \nabla_{\tilde{\theta}_k}\ell(f_{\tilde{\theta}}(x_i)) \min(1,C/\| \nabla_{\tilde{\theta}_k}\mathcal{L}(f(x_i; \tilde{\theta}))\|)$
   \EndFor
   \State $\tilde{g}_k \leftarrow \frac{1}{|\batchset|}\left(\sum_{i=1}^{|\batchset|} \tilde{g}_{k,i}+b_k\right)$
   \State $\tilde{\theta}_k \leftarrow \tilde{\theta}_k - \eta(t) \tilde{g}_k$
%   \State{$\thetatsucc \leftarrow \thetat - \eta(t)[s \gradt+\noiset]$}\Comment{Update}
  \EndFor
  \State {\bfseries Output:} $\tilde{\theta}$ and compute privacy cost $(\epsilon, \delta)$ with privacy accountant.

\end{algorithmic}
\end{algorithm}

\section{Estimating Lipschitz values}

We summarize the upper bounds of the Lipschitz values, either on the input or on the parameters, for each layer type in \cref{tab:lip}. It's important to mention that for the loss, the Lipschitz value is solely dependent on the output $x_{K+1}$.

\begin{table}[ht]
\centering
\begin{tabular}{|cccc|}
\hline
Layer & Definition & Lip.{} value on input $x_k$ & Lip.{} value on parameter $\theta_k$ \\
\hline
Dense & $\theta_k^{\top}x_k $ & $\|\theta_k\|$ & $\|x_k\|$ \\
\hline
Convolutional & $\theta_k \ast x_k$ & $\sqrt{h^\prime w^\prime}\|\theta_k\|$ & $\sqrt{h^\prime w^\prime}\|x_k\|$ \\
\hline
Normalization &  $\frac{x_k^{(k:\grpidx)}-\mu^{(k:\grpidx)}}{\max(\alpha, \sigma^{(k:\grpidx)})}$ & $1/\alpha$ & - \\
\hline
ReLU & $\max(x_k, 0)$ & $1$ & - \\
\hline
Sigmoid & $\frac{1}{1+e^{-x_k}}$ & $1/2$ & - \\
\hline
Softmax Cross-entropy & $y\log\textsc{softmax}(x_{K+1}) / \tau$ & $\sqrt{2}/\tau$ & - \\
\hline
Cosine Similarity & $\frac{x_{K+1}^{\top}y}{\|x_{K+1}\|_2\|y\|_2}$ & $1/\min \|x_{K+1}\|$ & - \\
\hline
Multiclass Hinge & $\left\{\max \left(0, \frac{m}{2}-x_{i_{K+1}} \cdot y_i\right)\right\}$ & $1$ & - \\
\hline
\end{tabular}
\caption{Summary table of Lipschitz values with respect to the layer.}\label{tab:lip}
\end{table}

with $\textsc{Softmax}(x_i) = \frac{\exp(x_i)}{\sum_{j=1}^c\exp(x_j)}$, $c$ the number of classes. For cross-entropy, $\tau$ an hyperparameter on the Softmax Cross-entropy loss also known as the temperature.  For convolutional layers, $h'$ and $w'$ are the height and width of the filter.  For multiclass hinge, $m$ is a hyperparameter known as 'margin'.

\subsection{Details for the convolutional layer}
\label{sec:bound.lipschitz.convol}

%We must distinguish two cases. In the first one, we consider the Lipschitz value with respect to the input $x_k$. Given that operators in convolutional layers, namely convolved feature maps, are linear with respect to the input, we can apply the previous inequality \ref{eq:linear_lip}:

%\begin{equation}\label{eq:conv_lip_input}
%  \begin{aligned}
%    \left\|\layerPartialInput{k}\right\|_2 = \left\| \frac{\partial \theta_k \ast x_k}{\partial x_k} \right \|_2 \leq  \sqrt{n_{k+1}}\|\theta_k\|_1.
%  \end{aligned}
%\end{equation}

%In the second one, we consider the Lipschitz value with respect to the parameters. We can interpret the convolved feature map as a dot product by re-arranging coordinates of the filters, the input and the output \cite{miyato2018spectral}.

\begin{theorem}
  The convolved feature map $(\theta \ast \cdot): \mathbb{R}^{n_k \times |x_k|} \rightarrow \mathbb{R}^{n_{k+1} \times n \times n}$, with zero or circular padding, is Lipschitz and
  \begin{equation}
    \|\nabla_{\theta_k} (\theta_k \ast x_k) \|_p \leq \sqrt{h'w'}\|x_k\|_p
    \text{ and }
    \|\nabla_{x_k} (\theta_k \ast x_k) \|_p \leq \sqrt{h'w'}\|\theta_k\|_p
  \end{equation}
  with $w'$ and $h'$ the width and the height of the filter.
\end{theorem}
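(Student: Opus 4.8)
The plan is to reduce both inequalities to a single bilinear bound and then prove that bound by Cauchy--Schwarz together with a multiplicity-counting argument. Since the map $(\theta_k,x_k)\mapsto \theta_k\ast x_k$ is bilinear, the Jacobian $\nabla_{x_k}(\theta_k\ast x_k)$ is exactly the linear operator $x\mapsto \theta_k\ast x$ (with $\theta_k$ held fixed), and $\nabla_{\theta_k}(\theta_k\ast x_k)$ is the linear operator $\theta\mapsto \theta\ast x_k$ (with $x_k$ held fixed). Their spectral norms are therefore the operator norms of these two maps, so it suffices to establish the symmetric estimate
\[
\|\theta\ast x\|_2 \;\le\; \sqrt{h'w'}\,\|\theta\|_2\,\|x\|_2,
\]
where both $\|\theta\|_2$ and $\|x\|_2$ are the Euclidean (Frobenius) norms of the flattened tensors. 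Dividing by $\|x\|_2$ yields the first claim and dividing by $\|\theta\|_2$ yields the second.

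First I would rewrite the squared output norm using the defining formula of the excerpt. Writing $P_{i,j}=(x_{k,d,i+r,j+s})_{d,r,s}$ for the input patch feeding output location $(i,j)$ and $\theta_{c}=(\theta_{k,c,d,r,s})_{d,r,s}$ for the $c$-th filter, each output entry is the inner product $x_{k+1,c,i,j}=\langle P_{i,j},\theta_c\rangle$, so
\[
\|\theta\ast x\|_2^2=\sum_{c,i,j}\langle P_{i,j},\theta_c\rangle^2.
\]
Applying Cauchy--Schwarz to each term gives $\langle P_{i,j},\theta_c\rangle^2\le \|P_{i,j}\|^2\,\|\theta_c\|^2$. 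Because $\|P_{i,j}\|$ does not depend on the output channel $c$ and $\|\theta_c\|$ does not depend on the spatial index $(i,j)$, the resulting double sum factorizes:
\[
\|\theta\ast x\|_2^2\le\Big(\sum_{i,j}\|P_{i,j}\|^2\Big)\Big(\sum_c\|\theta_c\|^2\Big),
\]
and the second factor is exactly $\|\theta\|_2^2$.

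The remaining---and central---step is the combinatorial bound $\sum_{i,j}\|P_{i,j}\|^2\le h'w'\,\|x\|_2^2$. Expanding, $\sum_{i,j}\|P_{i,j}\|^2=\sum_{i,j}\sum_{d,r,s}x_{k,d,i+r,j+s}^2$, and I would count, for each fixed input coordinate $(d,i',j')$, the number of patches in which the entry $x_{k,d,i',j'}$ occurs: it appears once for every $(i,j,r,s)$ with $i+r=i'$, $j+s=j'$ and $1\le r\le h'$, $1\le s\le w'$, hence at most $h'w'$ times. This gives $\sum_{i,j}\|P_{i,j}\|^2\le h'w'\sum_{d,i',j'}x_{k,d,i',j'}^2=h'w'\|x\|_2^2$. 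For circular padding each entry appears exactly $h'w'$ times, while for zero padding boundary entries appear fewer times, so the inequality holds for both padding conventions. Substituting into the factorized bound yields $\|\theta\ast x\|_2^2\le h'w'\|x\|_2^2\|\theta\|_2^2$, and taking square roots completes the reduction.

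The main obstacle I anticipate is precisely this multiplicity-counting step: one must verify that no input entry is reused more than $h'w'$ times across all output patches, being careful about the indexing offsets and the padding convention so that the count is never exceeded at the image boundary. Everything else is a routine application of Cauchy--Schwarz and the bilinearity of convolution.
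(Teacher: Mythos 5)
Your proposal is correct and follows essentially the same route as the paper's proof: both reduce to the bilinear estimate $\|\theta\ast x\|_2\le\sqrt{h'w'}\,\|\theta\|_2\|x\|_2$ via Cauchy--Schwarz on each output entry, and both obtain the factor $h'w'$ by observing that each input entry is reused at most $h'w'$ times across output positions (the paper phrases this as bounding each shifted sum $\sum_{r,s}x_{k,d,r+i,s+j}^2$ by the full sum and then summing over the $h'w'$ filter offsets). Your direct identification of the Jacobian of a linear map with the map itself, versus the paper's detour through Lipschitz continuity and differentiability, is only a cosmetic difference.
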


\begin{proof}
The output $x_{k+1} \in \mathbb{R}^{c_{o u t} \times n \times n}$ of the convolution operation is given by:
\[
x_{k+1, c, r, s}=\sum_{d=0}^{c_{i n}-1} \sum_{i=0}^{h'-1} \sum_{j=0}^{w'-1} x_{k, d, r+i, s+j} \theta_{k, c, d, i, j}
\]
There follows, for any $p \in \{1, 2, +\infty\}$:
\begin{eqnarray*}
  \|x_{k+1}\|^p_p
  &=& \sum_{c=0}^{c_{o u t}-1} \sum_{r=1}^n \sum_{s=1}^n \left| \sum_{d=0}^{c_{i n}-1} \sum_{i=0}^{h'-1} \sum_{j=0}^{w'-1} x_{k,d, r+i, s+j} \theta_{k, c, d, i, j}\right|^p \\
  &\le&  \sum_{c=0}^{c_{o u t}-1} \sum_{r=1}^n \sum_{s=1}^n \left( \sum_{d=0}^{c_{i n}-1} \sum_{i=0}^{h'-1} \sum_{j=0}^{w'-1} |x_{k,d, r+i, s+j}|^p \right) \left( \sum_{d=0}^{c_{i n}-1} \sum_{i=0}^{h'-1} \sum_{j=0}^{w'-1} |\theta_{k, c, d, i, j}|^p\right) \text{triangle inequality, $p \in \{1, 2, +\infty\}$}\\
  &=&  \left( \sum_{d=0}^{c_{i n}-1} \sum_{i=0}^{h'-1} \sum_{j=0}^{w'-1}  \sum_{r=1}^n \sum_{s=1}^n |x_{k,d, r+i, s+j}|^p \right) \left(  \sum_{c=0}^{c_{o u t}-1} \sum_{d=0}^{c_{i n}-1} \sum_{i=0}^{h'-1} \sum_{j=0}^{w'-1} |\theta_{k, c, d, i, j}|^p\right)  \\
  &\le& h'w' \left( \sum_{d=0}^{c_{i n}-1} \sum_{r=1}^n \sum_{s=1}^n |x_{k,d, r, s}|^p \right) \left(  \sum_{c=0}^{c_{o u t}-1} \sum_{d=0}^{c_{i n}-1} \sum_{i=0}^{h'-1} \sum_{j=0}^{w'-1} |\theta_{k, c, d, i, j}|^p\right)  \\
  &=& h'w' \|x_k \|^p_p \|\theta_k\|^p_p
\end{eqnarray*}

Since $\theta_k\ast \cdot$ is a linear operator:
\begin{equation*}
  \|(\theta_k \ast x_k) - (\theta_k^\prime \ast x_k)\|_p = \|(\theta_k - \theta_k^\prime) \ast x_k\|_p \leq \|\theta_k - \theta_k^\prime\|_p (h'w')^{\frac{1}{p}}\|x_k\|_p
\end{equation*}

Finally, the convolved feature map is differentiable so the spectral norm of its Jacobian is bounded by its Lipschitz value:
\begin{equation*}
  \|\nabla_{\theta_k} (\theta_k \ast x_k) \|_p \leq (h'w')^{\frac{1}{p}}\|x_k\|_p
\end{equation*}
Analogously,
\begin{equation*}
  \|\nabla_{x_k} (\theta_k \ast x_k) \|_p \leq (h'w')^{\frac{1}{p}}\|\theta_k\|_p
\end{equation*}
\end{proof}

\section{Experiments}\label{sec:app.exp}

\paragraph{Optimization.}
For both tabular and image datasets, we employed Bayesian optimization \cite{balandat_botorch_2020}. Configured as a multi-objective optimization program \cite{daulton_differentiable_2020}, our focus was to cover the Pareto front between model utility (accuracy) and privacy ($\epsilon$ values at a constant level of $\delta$, set to $1/n$ as has become common in this type of experiments). To get to finally reported values, we select the point on the pareto front given by the Python librairy BoTorch \cite{balandat_botorch_2020}.

\subsection{Hyperparameters}
\label{sec:app.exp.hyperparameters}

\paragraph{Hyperparameter selection.}
In the literature, there are a wide range of improvements possible over a direct application of SGD to supervised learning, including general strategies such as pre-training, data augmentation and feature engineering, and DP-SGD specific optimizations such as adaptive maximum gradient norm thresholds.  All of these can be applied in a similar way to both \weightDpSgd{} and \gradDpSgd{} and to keep our comparison sufficiently simple, fair and understandable we didn't consider the optimization of these choices.

We did tune hyperparameters inherent to specific model categories, in particular
the initial learning rate $\eta(0)$ (to start the adaptive learning rate strategy $\eta(t)$) and (for image datasets) the number of groups, and hyperparameters related to the learning algorithm, in particular the (expected) batch size $s$, the Lipschitz upper bound of the normalization layer $\alpha$ and the threshold $C$ on the gradient norm respectively weight norm.

The initial learning rate $\eta(0)$ is tuned while the following $\eta(t)$ are set adaptively. Specifically, we use the strategy of the Adam algorithm \cite{adam}, which update each parameter using the ratio between the moving average of the gradient (first moment) and the square root of the moving average of its squared value (second moment), ensuring fast convergence.

We also investigated varying the maximum norm of input vectors $X_0$ and the hyperparameter $\tau$ of the cross entropy objective function, but the effect of these hyperparameters turned out to be insignificant.
% tried to tune for weights X_0 and temperature, but no significant effect.
%weight: s, X_0, C, temperature
%grad: s, C
%softmax CE -> \sqrt{2}/temp
%a set of hyperparameters that directly impact parameter norms for \weightDpSgd. Among them are the input norm $X_0$, an upper bound of the Lipschitz value of the loss function $l_{K+1}$, batch size $s$, and the respective Lipschitz constraints of the layers $C$.
%In the case of \gradDpSgd, we search for optimal gradient clipping norm and batch size $s$. Similarly to the clipping norm of the \gradDpSgd, the Lipschitz constrains of the layer $C$ of \weightDpSgd{}

Both the clipping threshold $C$ for gradients in \gradDpSgd{} and the clipping threshold $C$ for weights in \weightDpSgd{} can be tuned for each layer separately. While this offers improved performance, it does come with the cost of consuming more of the privacy budget, and substantially increasing the dimensionality of the hyperparameter search space.
In a few experiments we didn't see significant improvements in allowing per-layer varying of $C_k$, so we didn't further pursue this avenue.

\cref{tab:hyper} summarizes the search space of hyperparameters.
It's important to note that we did not account for potential (small) privacy losses caused by hyperparameter search, a limitation also acknowledged in other recent works such as \cite{papernot_hyperparameter_2022}.

\begin{table}[ht]
\centering
\begin{tabular}{|cc|}
\hline
Hyperparameter & Range  \\
\hline
Noise multiplier $\sigma$ & [0.4, 5] \\
\hline
Weight clipping threshold $C$ & [1, 15] \\
\hline
Gradient clipping threshold $C$ & [1, 15] \\
\hline
Batch size $s$ & [32, 512] \\
\hline
$\eta(0)$ & [0.0001, 0.01] \\
\hline
Number of groups (group normalization) & [8, 32] \\
\hline
$\alpha$ (group normalization) & [$0.1/(|x_k^{(k:\grpidx)}|)$, $1/(|x_k^{(k:\grpidx)}|)$] \\
\hline
% Max input norm $X_0$  & [1, 12] \\
% \hline
% Hyperparameter of CCE $\tau$  & [1, 12] \\
% \hline
\end{tabular}
\caption{Summary of hyperparameter space.}\label{tab:hyper}
\end{table}

%\janfoot{this is inconsistent: the text mentions that we optimize $\theta(0)$, $T$, $s$ and $C$, while the Table presents a different set.  In particular, text and Table overlap in Number of epochs $T$ (maybe), norm threshold $C$ and Batch size $s$;  The table contains (and the text does not contain) $\tau$ and 'noise multiplier'; the Text contains (and the table does not contain) $\eta(0)$.}
%\antoinefoot{I've fixed the table. As for noise multiplier, this is to cover levels of privacy, I've added this comment in the table.}
%\janfoot{Ok, but what is a noise multiplier?}\antoinefoot{I've added a definition of the noise multiplier above}

\subsection{Models}

\cref{tab:data} shows details of the models we used to train on tabular and image datasets. We consider $7$ tabular datasets: adult income \cite{misc_adult_2}, android permissions \cite{misc_naticusdroid_android_permissions_dataset_722}, breast cancer \cite{misc_breast_cancer_wisconsin_diagnostic_17}, default credit \cite{misc_default_of_credit_card_clients_350}, dropout \cite{misc_predict_students'_dropout_and_academic_success_697}, German credit \cite{misc_statlog_german_credit_data_144} and nursery \cite{misc_nursery_76}.  See Table \ref{tab:auc} for the number of instances and features for each tabular dataset.

\begin{table}[ht]
\centering
\begin{tabular}{|cccccc|}
\hline
Dataset & Image size & Model & Number of layers & Loss & No. of Parameters \\
\hline
Tabular Datasets & -  & MLP & 2 & CE & 140 to 2,120 \\
\hline
MNIST & 28x28x1 & ConvNet & 4 & CE &1,625,866 \\
\hline
FashionMNIST & 28x28x1 & ConvNet & 4 & CE & 1,625,866 \\
\hline
CIFAR-10 & 32x32x3 & WideResnet & 16 (for $\epsilon \leq 4.2$) 40 (for $\epsilon > 4.2$) & CE & 8,944,266 \\
\hline
\end{tabular}
\caption{Summary table of datasets with respective models architectures details.}\label{tab:data}
\end{table}

\subsection{Runtime}\label{runtime}

Our experiments didn't show significant deviations from the normal runtime behavior one can expect for neural network training.  As an illustration, we compared on the MNIST dataset and on the CIFAR-10 dataset %, Fashion-MNIST and CIFAR-10
the median epoch runtime of \gradDpSgd{} with \weightDpSgd{}. Both implementations utilize Opacus \cite{opacus} and PyTorch \cite{pytorch}, employing the same set of hyperparameters, such as augmentation multiplicity, to ensure a fair comparison.
%Since \weightDpSgd{} is implemented with PyTorch and Opacus, we compare with a similar implementation for \gradDpSgd{}.
We measure runtime against the logical batch size, limiting the physical batch size to prevent memory errors as recommended by the PyTorch documentation \cite{pytorch}. \cref{fig:runtime} shows how \weightDpSgd{} is more efficient in terms of runtime compared to \gradDpSgd{}, especially for big batch sizes. This is mainly due to the fact that \gradDpSgd{} requires to clip the gradient at the sample level, slowering down the process.

\begin{figure*}
  \centering
  \mbox{
    \subfigure[MNIST\label{fig:mnist_run}]{\includegraphics[width=0.48\linewidth]{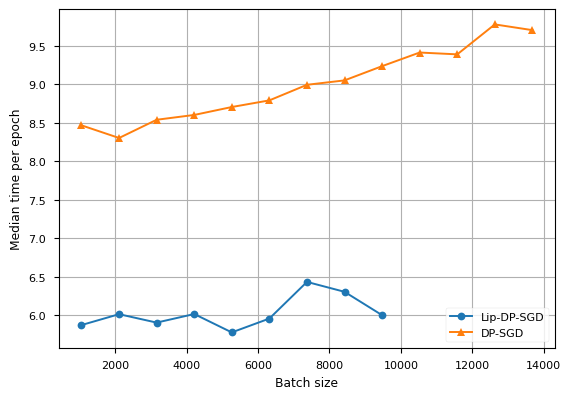}}
    \subfigure[CIFAR-10\label{fig:cifar_run}]{\includegraphics[width=0.48\linewidth]{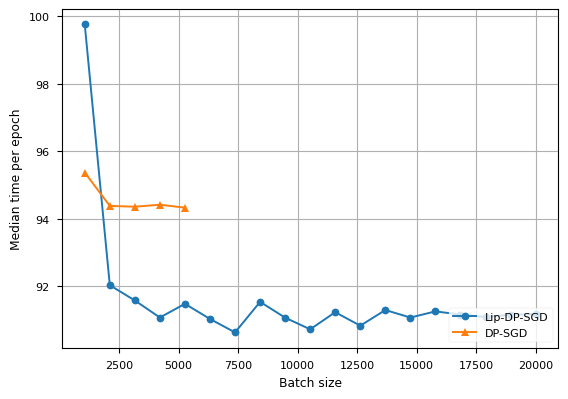}}\quad
  }
  \caption{Median runtime in seconds per batch size on one epoch over the MNIST dataset \ref{fig:mnist_run} and the CIFAR-10 dataset \ref{fig:cifar_run} comparing \gradDpSgd{} (in orange) and \weightDpSgd{} (in blue).}
  \label{fig:runtime}
\end{figure*}

\subsection{Detailed results}\label{app:gn_table}
\cref{tab:detail_acc} provides a summary of accuracy performances for tabular datasets with and without group normalization. It's worth noting that while epsilon values may not be identical across algorithms, we present the epsilon value of \gradDpSgd{} and report performances corresponding to lower epsilon values for the other two algorithms, consistent with \cref{tab:auc}.

\begin{table*}
  \caption{Accuracy per dataset and method at $\epsilon=1$ and $\delta=1/n$, in bold the best result and underlined when the difference with the best result is not statistically significant at a level of confidence of 5\%.}
  \label{tab:detail_acc}
  \centering
  \begin{tabular}{lrrrrrrr}
    \toprule
    Methods & & \multicolumn{2}{c}{\gradDpSgd{}}   & \multicolumn{2}{c}{\weightDpSgd{}} & \multicolumn{2}{c}{\globalWeightDpSgd{}}  \\
    Datasets (\#instances $n$ $\times$ \#features $p$) & $\epsilon$ & w/ GN & w/o GN & w/ GN & w/o GN & w/ GN & w/o GN \\
    \midrule
    Adult Income (48842x14) \cite{misc_adult_2} & 0.414 & 0.822 & 0.824 & 0.829 & \textbf{0.831} & 0.713 & 0.804 \\
    Android (29333x87) \cite{misc_naticusdroid_android_permissions_dataset_722} & 1.273 & 0.947 & 0.951 & 0.952 & \textbf{0.959} & 0.701 & 0.945 \\
    Breast Cancer (569x32) \cite{misc_breast_cancer_wisconsin_diagnostic_17} & 1.672 & 0.813 & 0.773 & \textbf{0.924} & 0.798 & 0.519 & 0.7 \\
    Default Credit (30000x24) \cite{misc_default_of_credit_card_clients_350} & 1.442 & 0.804 & 0.809 & 0.815 & \textbf{0.816} & 0.774 & 0.792 \\
    Dropout (4424x36) \cite{misc_predict_students'_dropout_and_academic_success_697} & 1.326 & 0.755 & 0.763 & 0.816 & \textbf{0.819} & 0.573 & 0.736 \\
    German Credit (1000x20) \cite{misc_statlog_german_credit_data_144} & 3.852 & 0.735 & 0.735 & 0.722 & \underline{0.746} & 0.493 & 0.68 \\
    Nursery (12960x8) \cite{misc_nursery_76} & 1.432 & 0.916 & 0.919 & 0.912 & \textbf{0.931} & 0.487 & 0.89 \\
    \bottomrule
  \end{tabular}
\end{table*}

\subsection{Gradient clipping behavior}
\label{app:exp.clipfreq}

In Section \ref{sec:avoid.bias} we argued that \gradDpSgd{} introduces bias.
There are several ways to demonstrate this.  For illustration we show here the error between the true average gradient
\[
  g_k^{\weightDpSgd} = \frac{1}{|V|} \sum_{i=1}^{|V|} \nabla_{\tilde{\theta}_k} \ell(f_{\tilde{\theta}}(x_i))
\]
i.e., the model update of Algorithm \ref{alg:weight-dpsgd} without noise, and the average clipped gradient
\[
g_k^{\gradDpSgd} = \frac{1}{|V|}\sum_{i=1}^{|V|} \hbox{clip}_C\left( \nabla_{\tilde{\theta}_k} \ell(f_{\tilde{\theta}}(x_i)) \right),
\]
i.e., the model update of Algorithm \ref{alg:grad-dpsgd} without noise.

\cref{fig:clip} shows the error $\left\| g_k^{\weightDpSgd} - g_k^{\gradDpSgd} \right\|$ together with the norm of the \gradDpSgd{} model update $\left\|g_k^{\gradDpSgd}\right\|$.

One can observe for both considered datasets that while the model converges and the average clipped gradient decreases, the error between \gradDpSgd{}'s average clipped gradient and the true average gradient increases.  At the end, the error in the gradient caused by clipping is significant, and hence the model converges to a different point than the real optimum.

\begin{figure*}
  \centering
  \mbox{
    \subfigure[Dropout\label{fig:dropout_clip}]{\includegraphics[width=0.48\linewidth]{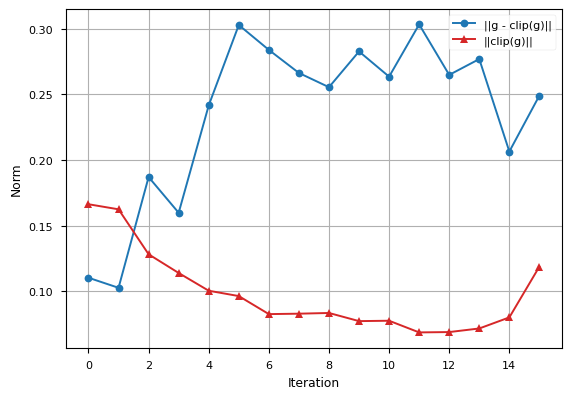}}\quad
    \subfigure[Adult Income\label{fig:income_clip}]{\includegraphics[width=0.48\linewidth]{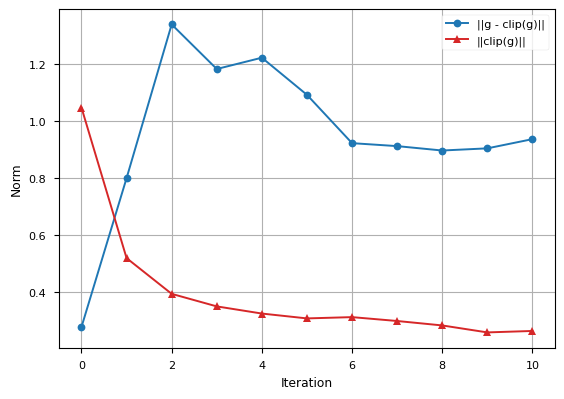}}
  }
  \caption{Norm of the average error $g - \text{clip}(g)$ (in blue) and norm of the average of $\text{clip}(g)$ (in red) across training iterations on the Dropout dataset \ref{fig:dropout_clip} (averaged over 500 instances) and the Adult Income dataset \ref{fig:income_clip} (averaged over 500 instances).}
  \label{fig:clip}
\end{figure*}

\section{\weightDpSgd{} library}
\label{sec:lib-LipDpSgd}
We offer an open-source toolkit for implementing LipDP-SGD on any FNN model structure. This toolkit builds on the Opacus and PyTorch libraries.
Drawing inspiration from Opacus, our library is based on two main components: the `DataLoader`, which utilizes Poisson sampling to harness the advantages of privacy amplification \cite{kasiviswanathan_what_2010}, and the `Optimizer`, responsible for sensitivity calculation, differential privacy noise addition, and parameter normalization during each iteration.

`README.md`, provided in the supplementary materials, details how to run the library and how to reproduce the experiments.

% Below is a sample showcasing how to set up training using LipDP-SGD:
% % \lstinputlisting[language=Python]{algos/makelipprivate.py}
% \begin{lstlisting}[language=Python]
% from lip_dp import LipPrivacyEngine
%
% lip_privacy_engine = LipPrivacyEngine()
%
% model, optimizer, train_loader = lip_privacy_engine.make_private_with_epsilon(
%     module=model,
%     optimizer=optimizer,
%     data_loader=train_loader,
%     epochs=EPOCHS,
%     target_epsilon=EPSILON,
%     target_delta=DELTA,
% )
% \end{lstlisting}

\section{Avoiding the bias of gradient clipping}
\label{app:bias}

We show that \weightDpSgd{} converges to a local minimum in $\thetaSpaceLeC$ while \gradDpSgd{} suffers from bias and may converge to a point which is not a local minimum of $\thetaSpace$.

We use the word 'converge' here somewhat informally, as in each iteration independent noise is added the objective function slightly varies between iterations and hence none of the mentioned algorithms converges to an exact point.
We here informally mean approximate convergence to a small region, assuming a sufficiently large data set $Z$ and/or larger $\epsilon$ such that privacy noise doesn't significantly alter the shape of the objective function.  Our argument below hence makes abstraction of the noise for simplicity, but in the presence of small amounts of noise a similar argument holds approximately, i.e., after sufficient iterations \weightDpSgd{} will produce $\theta$ values close to a locally optimal $\theta^*$ while \gradDpSgd{} may produce $\theta$ values in a region not containing the relevant local minimum.

First, let us consider convergence.

\textbf{Theorem} \ref{thm:weightClipConverges}.  \thmStmWeightClipConverges{}

\begin{proof}[Proof sketch]
  We consider the problem of finding a local optimum in $\thetaSpaceLeC$:
  \[
    \begin{array}{ll}
      \text{minimize} & F(\theta,Z) \\
      \text{subject to} & \|\theta\|_2 \le C
    \end{array}
  \]
  We introduce a slack variable $\zeta$:
   \[
    \begin{array}{ll}
      \text{minimize} & F(\theta,Z) \\
      \text{subject to} & \|\theta\|_2 + \zeta^2 = C
    \end{array}
  \]
  Using Lagrange multipliers, we should minimize
  \[
    F(\theta,Z) - \lambda (\|\theta\|_2 + \zeta^2 - C)
  \]
  An optimum in $\theta$, $\lambda$ and $\zeta$ satisfies
  \begin{eqnarray}
    \label{eq:LeC.lagrange.1} \nabla_\theta F(\theta, Z) - \lambda \theta &=& 0\\
    \label{eq:LeC.lagrange.2} \|\theta\|_2 + \zeta^2 - C &=& 0\\
    \label{eq:LeC.lagrange.3} 2\lambda \zeta &=&0
  \end{eqnarray}
  From Eq \ref{eq:LeC.lagrange.3}, either $\lambda=0$ or $\zeta=0$
  If $\zeta>0$, $\theta$ is in the interior of $\thetaSpaceLeC$ and there follows $\lambda=0$ and from Eq \ref{eq:LeC.lagrange.1} that $\nabla_\theta F(\theta, Z) =0$.  For such $\theta$, \weightDpSgd{} does not perform weight clipping.  If the learning rate is sufficiently small, and if it converges to a $\theta$ with norm $\|\theta\|_2<C$ it is a local optimum.
  On the other hand, if $\zeta=0$, there follows from Eq \ref{eq:LeC.lagrange.2} that $\|\theta\|_2=C$, i.e., $\theta$ is on the boundary of $\thetaSpaceLeC$.
  If $\theta$ is a local optimum in $\thetaSpaceLeC$, then $\nabla_\theta F(\theta,Z)$ is perpendicular on the ball of vectors $\theta$ with norm $C$, and for such $\theta$ \weightDpSgd{} will add the multiple $\eta(t).\nabla_\theta F(\theta,Z)$ to $\theta$ and will next scale $\theta$ back to norm $C$, leaving $\theta$ unchanged.  For a $\theta$ which is not a local optimum in $\thetaSpaceLeC$, $\nabla_\theta F(\theta,Z)$ will not be perpendicular to the ball of $C$-norm parameter vectors, and adding the gradient and brining the norm back to $C$ will move $\theta$ closer to a local optimum on this boundary of $\thetaSpaceLeC$.  This is consistent with Eq \ref{eq:LeC.lagrange.1} which shows the gradient with respect to $\theta$ for the constrained problem to be of the form $\nabla_\theta F(\theta, Z) - \lambda \theta$.
\end{proof}

Theorem \ref{thm:weightClipConverges} shows that in a noiseless setting, if \weightDpSgd{} converges to a stable point that point will be a local optimum in $\thetaSpaceLeC$.  In the presence of noise and/or stochastic batch selection, algorithms of course don't converge to a specific point but move around close to the optimal point due to the noise in each iteration, and advanced methods exist to examine such kind of convergence.  The conclusion remains the same: \weightDpSgd will converge to a neighborhood of the real local optimum, while as we argue \gradDpSgd{} will often converge to a neighborhood of a different point.

Second, we argue that \gradDpSgd{} introduces bias. This was already pointed out in \cite{chen_understanding_2021}'s examples 1 and 2.  In Section \ref{app:exp.clipfreq} we also showed experiments demonstrating this phenomenon.  Below, we provide a simple example which we can handle (almost) analytically.
%we show with an example that \gradDpSgd{} may introduce bias and does not have this desirable property.

A simple situation where bias occurs and \gradDpSgd{} does not converge to an optimum of $F$ is when errors aren't symmetrically distributed, e.g., positive errors are less frequent but larger than negative errors.

Consider the scenario of simple linear regression.
A common assumption of linear regression is that instances are of the form $(x_i,y_i)$ where $x_i$ is drawn from some distribution $P_x$ and $y_i=ax_i+b+e_i$ where $e_i$ is drawn from some zero-mean distribution $P_e$.  When no other evidence is available, one often assume $P_e$ to be Gaussian, but this is not necessarily the case.  Suppose for our example that $P_x$ is the uniform distribution over $[0,1]$ and $P_e$ only has two possible values, in particular $P_e(9)=0.1$, $P_e(-1)=0.9$ and $P_e(e)=0$ for $e\not\in\{9,-1\}$.  So with high probability there is a small negative error $e_i$ while with small probability there is a large positive error, while the average $e_i$ is still $0$.
Consider a dataset $Z=\{(x_i,y_i)\}_{i=1}^n$.
Let us consider a model $f(x) = \theta_1 x \theta_2$ and
let us use the square loss $\mathcal{L}(\theta,Z)=\sum_{i=1}^n \ell(x_i,y_i)/n$ with $\ell(\theta, x,y) = (\theta_1 x + \theta_2 - y)^2$.
Then, the gradient is
\[
  \nabla_\theta \ell(\theta, x,y) = \left( 2(\theta_1 x + \theta_2 -y) x, 2(\theta_1 x + \theta_2 - y)\right)
\]
For an instance $(x_i,y_i)$ with $y_i = ax_i+b+e_i$, this implies
\[
  \nabla_\theta \ell(\theta, x_i,y_i) = \left( 2((\theta_1-a) x_i + (\theta_2-b) - e_i) x_i, 2((\theta_1-a) x_i + (\theta_2-b) - e_i)\right)
\]
For sufficiently large datasets $Z$ where empirical loss approximates population loss, the gradient considered by \weightDpSgd{} will approximate
\begin{eqnarray*}
  \nabla_\theta \mathcal{L}(\theta,Z)
  &\approx&  \sum_{e\in \{10,\}} P_e(e) \int_0^1 \nabla_\theta \ell(\theta, x, ax+b+e) \hbox{d}x \\
  &=&  \sum_{e\in \{10,\}} P_e(e) \int_0^1 \left( 2((\theta_1-a) x + (\theta_2-b) - e) x, 2((\theta_1-a) x + (\theta_2-b) - e)\right) \hbox{d}x \\
  &=&  \int_0^1 \left( 2((\theta_1-a) x^2 + (\theta_2-b)x - x\mathbb{E}[e]), 2((\theta_1-a) x + (\theta_2-b) - \mathbb{E}[e])\right) \hbox{d}x \\
  &=&  \left( 2((\theta_1-a)/3 + (\theta_2-b)/2), 2((\theta_1-a)/2 + (\theta_2-b))\right)
\end{eqnarray*}
This gradient becomes zero if $\theta_1=a$ and $\theta_2=b$ as intended.

However, if we use gradient clipping with threshold $C=1$ as in \gradDpSgd{}, we get:
\begin{eqnarray*}
  {\tilde{g}}
  &\approx&  \sum_{e\in \{10,\}} P_e(e) \int_0^1 clip_1\left(\nabla_\theta \ell(\theta, x, ax+b+e)\right) \hbox{d}x \\
  &=&  \sum_{e\in \{10,\}} P_e(e) \int_0^1 clip_1\left(
       \left( 2((\theta_1-a) x + (\theta_2-b) - e) x, 2((\theta_1-a) x + (\theta_2-b) - e)\right)
      \right) \hbox{d}x \\
\end{eqnarray*}
While for a given $e$ for part of the population $(\theta_1-a)x+\theta_2-b$ may be small, for a fraction of the instances the gradients are clipped.  For the instances with $e=9$ this effect is stronger.  The result is that for $\theta_1=a$ and $\theta_2=b$ the average clipped gradient $\tilde{g}$ doesn't become zero anymore, in particular $\|\tilde{g}\|=0.7791$.  In fact, $\tilde{g}$ becomes zero for $\theta_1=a+0.01765$ and $\theta_2=b+0.94221$.  Figure \ref{fig:grad_clip_bias} illustrates this situation.

\begin{figure}[htb]
  \label{fig:grad_clip_bias}
  \caption{An example of gradient clipping causing bias, here the average gradient becomes zero at $(0,0)$ while the average clipped gradient is $0$ at another point, causing convergence of \gradDpSgd{} to that point rather than the correct one.}
  \begin{center}
    \includegraphics[width=0.5\linewidth]{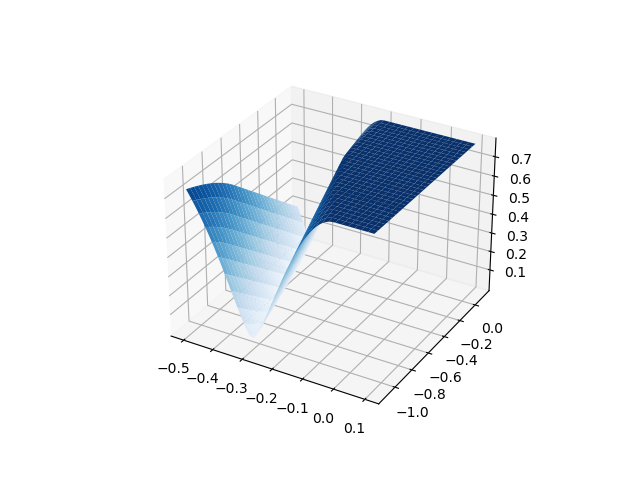}
    \end{center}
\end{figure}

\end{document}